\def\eqref#1{equation~\ref{#1}}
\def\1{\bm{1}}
\def\diag{\mathrm{diag}}
\DeclareMathAlphabet{\mathsfit}{\encodingdefault}{\sfdefault}{m}{sl}
\SetMathAlphabet{\mathsfit}{bold}{\encodingdefault}{\sfdefault}{bx}{n}
\newcommand{\R}{\mathbb{R}}
\definecolor{cvprblue}{rgb}{0.21,0.49,0.74}
\newtheorem{lemma}{Lemma}
\definecolor{tabvline}{rgb}{0.6, 0.6, 0.6}
\definecolor{backcolour}{RGB}{245,248,250}
\definecolor{codegreen}{rgb}{0,0.6,0}
\definecolor{keywords}{RGB}{207,33,46}
\definecolor{nightblue}{RGB}{9,49,105}
\definecolor{lightpurple}{RGB}{130,81,223}
\definecolor{codegray}{rgb}{0.5,0.5,0.5}
\newcommand{\Gmi}[1]{\mathbf{G}_{#1}}
\newcommand{\Wmi}[1]{\mathbf{W}_{#1}}
\newcommand{\Mmi}[1]{\mathbf{M}_{#1}}
\newcommand{\Vmi}[1]{\mathbf{V}_{#1}}
\newcommand{\Omi}[1]{\mathbf{O}_{#1}}
\newcommand{\Umi}[1]{\mathbf{U}_{#1}}
\newcommand{\Nmi}[1]{\mathbf{N}_{#1}}
\newcommand{\Umbi}[1]{\bar{\mathbf{U}}_{#1}}
\newcommand{\Sigmai}[1]{\mathbf{\Sigma}_{#1}}
\newcommand{\Mmbi}[1]{\mathbf{M}'_{#1}}
\title{Conda: Column-Normalized Adam for Training Large Language Models Faster}
\author{Junjie Wang$^1$\quad Pan Zhou$^2$\quad Yiming Dong$^1$\thanks{Work was done during an internship at ByteDance Seed.}\quad Huan Li$^3$\quad Jia Li$^4$\quad Xun Zhou$^5$\\ \textbf{Qicheng Lao}$^6$ \quad \textbf{Cong Fang}$^1$\quad \textbf{Zhouchen Lin}$^{1,7,8}$\thanks{Corresponding author.} \\
$^1$State Key Lab of General AI, School of Intelligence Science and Technology, Peking University\\
$^2$Singapore Management University \quad
$^3$Nankai University \quad
$^4$Beijing Normal University\\
$^5$ByteDance Seed \quad
$^6$Beijing University of Posts and Telecommunications\\
$^7$Institute for Artificial Intelligence, Peking University\\
$^8$Pazhou Laboratory (Huangpu), Guangzhou, Guangdong, China\\
\texttt{\{wangjunjie25, ymdong\}@stu.pku.edu.cn, panzhou@smu.edu.sg, }\\
\texttt{lihuanss@nankai.edu.cn, jiali@bnu.edu.cn, zhouxun@bytedance.com}\\
\texttt{qicheng.lao@bupt.edu.cn, \{fangcong, zlin\}@pku.edu.cn} \\
}
\begin{document}

\maketitle

\begin{abstract}
Large language models (LLMs) have demonstrated impressive generalization and emergent capabilities, yet their pre-training remains computationally expensive and sensitive to optimization dynamics. While Adam-based optimizers offer fast convergence by adapting learning rates coordinate-wise, recent studies reveal that their updates often suffer from poor spectral conditioning and low-rank structures, hindering efficiency. Muon addresses this issue via global spectral normalization but lacks the per-coordinate adaptivity of Adam. In this work, we propose \textbf{Column-Normalized Adam (Conda)}, a novel optimizer that bridges the strengths of both approaches. Conda projects updates into an orthogonal subspace and applies column-wise second moment normalization based on the projected gradients, thereby achieving both improved spectral conditioning and maintaining coordinate-wise adaptivity. This design alleviates the spectral pathologies of Adam while preserving its fast convergence behavior. Extensive experiments on the LLaMA and GPT-2 series show that Conda consistently outperforms AdamW, Muon, and other baselines in pre-training. Remarkably, on the LLaMA series, \textbf{Conda achieves $2{\sim}2.5\times$ the convergence speed of AdamW, measured in both training steps and training time.} Further ablations demonstrate its robustness under diverse training setups. These results collectively highlight Conda as an effective and broadly applicable optimizer for large-scale LLM training. The code is released on \url{https://github.com/jie040109/Conda}
\end{abstract}

\section{Introduction}
\label{sec:intro}
Over the past decade, deep learning has driven transformative progress in fields such as computer vision and natural language processing~\citep{szegedy2015going,he2016deep,wang2024mlae,dosovitskiy2020image,liu2022convnet}. This progress is particularly evident in the emergence of large language models (LLMs)~\citep{achiam2023gpt,liu2024deepseek,grattafiori2024llama,team2023gemini,yang2024qwen2}, which have become a central paradigm, achieving strong performance across a wide range of tasks, including text generation, reasoning, and multi-modal understanding.

Despite their advances, LLMs come with escalating computational and financial costs, making optimization efficiency a critical bottleneck. Optimizers lie at the heart of this challenge. Transformer-based architectures are known to exhibit significant heterogeneity in their gradients and Hessians~\citep{zhang2024transformers,tomihari2025understanding}, rendering the uniform update rules of stochastic gradient descent (SGD)~\citep{bottou2018optimization} ineffective. Adaptive methods like Adam and AdamW~\citep{kingma2014adam,loshchilov2017decoupled} address this issue by adjusting coordinate-wise learning rates using second-moment estimates of gradients. This  has made them the de facto standard for training large-scale transformers~\citep{zhang2020adaptive,kunstner2023noise}.

However, recent work has revealed a fundamental inefficiency in Adam’s update dynamics. For the two-dimensional parameter matrices prevalent in transformers, Adam’s updates often exhibit high condition numbers and low-rank structures~\citep{jordan2024muon,zhao2021zero,yang2023spectral,cosson2023low}. These spectral pathologies severely impair optimization efficiency. To address this, Muon~\citep{jordan2024muon} was proposed as a promising alternative. Building on SGD with momentum~\citep{sutskever2013importance}, Muon employs a Newton–Schulz iteration to normalize the update matrix by equalizing all singular values. This explicit spectral normalization suppresses dominant directions and produces well-conditioned updates, accelerating convergence. Yet, Muon discards the coordinate-wise adaptivity that makes Adam and AdamW highly effective in transformers. As a result, Muon’s uniform normalization, while spectrally elegant, risks overshooting updates and neglecting fine-grained gradient variations, limiting its adaptability in large-scale LLM training.   Motivated by these observations, a natural yet challenging question arises: \emph{how can we integrate similar normalization benefits of Muon into Adam?} Such integration holds substantial promise for achieving faster and more stable convergence than  Muon, since Adam is often much faster than SGD-momentum, upon which Muon is based, especially for transformer networks.

To answer this question, we first reformulate Muon into an equivalent form involving first and second moment estimations, closely aligning it with Adam’s structure. While Muon originally performs explicit spectral normalization through Newton–Schulz iteration without defining second moment estimates, our reformulation reveals that this normalization implicitly corresponds to uniform second moment scaling. Thus, a key structural difference emerges: Adam adopts coordinate-wise adaptivity via element-wise second moment estimation, whereas Muon uniformly applies orthogonal projection and singular-value normalization. Muon’s uniform normalization, although effective in spectral conditioning, may overshoot updates and neglect coordinate-wise gradient variations, limiting its adaptivity particularly in transformer training scenarios. 

In this work, we propose Column-Normalized Adam (Conda). Conda retains Adam’s coordinate-wise adaptivity while incorporating a milder, column-specific spectral normalization. Instead of normalizing all directions uniformly, Conda projects updates into an orthogonal subspace and applies separate second moment-based normalization to each column using projected gradients. This design alleviates the spectral pathologies of Adam while preserving the structure and relative scaling of the update matrix, resulting in better-conditioned updates and more stable convergence behavior.

We validate Conda extensively on large-scale LLM pre-training and fine-tuning. On  LLaMA series~\citep{touvron2023llama}, Conda achieves {$2{\sim}2.5\times$ faster convergence} than AdamW, measured by both training steps and wall-clock time. It also shows consistent gains on GPT-2~\citep{radford2019language} and across diverse   fine-tuning tasks. Comprehensive ablations on sequence length, hyperparameters,  subspace update frequency, and memory usage confirm Conda’s robustness and scalability.

\section{Related Works}
\label{sec:related works}

Adaptive optimizers adjust per-parameter learning rates using gradient history, enabling faster convergence and robustness to sparse or noisy gradients.  
Adagrad~\citep{duchi2011adaptive} introduced per-parameter scaling but suffers from aggressive decay, while RMSprop~\citep{hinton2012neural} improved stability via exponential moving averages. Adam~\citep{kingma2014adam}, combining momentum and adaptive scaling, remains the default, with AdamW~\citep{loshchilov2017decoupled} further improving generalization by decoupling weight decay. Recent variants enhance efficiency and convergence: Adan~\citep{xie2024adan} and Win~\citep{zhou2024win,zhou2023win} strengthen Adam with Nesterov acceleration; Lion~\citep{chen2023symbolic} removes second-moment tracking for memory efficiency; Sophia~\citep{liu2023sophia} leverages approximate second-order information; and Adam-mini~\citep{zhang2024adam} reduces memory via block-wise learning rates.

Beyond vectorized updates, newer methods exploit matrix structure. KFAC~\citep{martens2015optimizing} and Shampoo~\citep{gupta2018shampoo} use Kronecker-factored curvature approximations for efficient second order updates. Adafactor~\citep{shazeer2018adafactor} reduces memory via second moment factorization, while LAMB~\citep{you2019large} introduces layer-wise normalization for stable large-batch training. Recent advances improve scalability and structural efficiency~\citep{zhao2024galore,refael2024adarankgrad,vyas2024soap,jordan2024muon,wang20244,chen2024fira,pethick2025training,nguyen2025improving,liu2025cosmos,an2025asgo,xie2025structured,yu2024zeroth}. GaLore~\citep{zhao2024galore} projects gradients into low-rank subspaces to reduce memory usage. SOAP~\citep{vyas2024soap} interprets Shampoo in the eigenbasis of its preconditioner and performs Adam-style updates, improving performance with added overhead. Muon~\citep{jordan2024muon} regularizes the spectrum of update matrices via Newton–Schulz iterations, improving isotropy and stability. AdaDiag~\citep{nguyen2025improving} leverages SVD to transform gradients into a near-diagonal preconditioning space, accelerating convergence. These methods show that structural awareness can substantially enhance training efficiency at scale.

\section{Column-Normalized Adam}
\label{sec:method}
\subsection{Preliminary and Motivation}
\label{motivations}
Here, we first briefly introduce Adam and Muon, and then analyze Muon for motivating our optimizer.

\textbf{Adam Optimizer.}  Nowadays, Adam and its variants have been the most popular optimizers for AI model training across diverse tasks~\citep{radford2019language,brown2020language,chowdhery2023palm,grattafiori2024llama}. At training iteration \( t \), let \( \Wmi{t} \in \mathbb{R}^{m \times n} \) be the weight matrix, and assume \( \Gmi{t} \in \mathbb{R}^{m \times n} \) is the stochastic  gradient. Then Adam can first estimate the first moment \( \Mmi{t} \) and the second moment \( \Nmi{t} \), and then update the parameters as follows\footnote{Here we omit the bias correction and the small constant $\epsilon$ during updating for numeric stability. We emphasize that, except for matrix multiplication and SVD, all other arithmetic operations (such as squaring, division, and square root) are performed element-wise.}:
\begin{align}\label{adam}
  \begin{cases}
     \Mmi{t} = \beta_1  \Mmi{t-1} + (1-\beta_1)\Gmi{t}, \\
    \Nmi{t} = \beta_2 \Nmi{t-1} + (1-\beta_2)\Gmi{t}^2 ,\\ 
    \Wmi{t} = \Wmi{t-1} - \eta   \Mmi{t}/\sqrt{\Nmi{t}}.
  \end{cases} 
\end{align}
Recent studies have shown that gradients and Hessians in transformer-based architectures exhibit significant heterogeneity~\citep{zhang2024transformers, tomihari2025understanding}, which limits the effectiveness of uniform learning rate schemes used in traditional stochastic gradient descent (SGD) and its momentum variant~\citep{bottou2018optimization, sutskever2013importance}.
In contrast, Adam often achieves significantly faster convergence due to its coordinate-wise adaptivity, which enables it to automatically adjust the learning rate of each parameter coordinate~\citep{xie2025adamexploitsellinftygeometryloss,kingma2014adam,zhou2024towards,zhou2020towards}. Concretely, for the $(i,j)$-th coordinate, its learning rate becomes $\eta /\sqrt{\Nmi{t,i,j}}$ which adaptively considers the current geometric curvature and dynamically changes, where $\Nmi{t,i,j}$ is the $(i,j)$-th element in  $\Nmi{t}$.  

\textbf{Muon Optimizer.}  Build upon SGD-momentum(SGDM)~\citep{sutskever2013importance}, Muon \citep{jordan2024muon} is proposed, and has shown promising fast convergence speed with less GPU memory cost when training larger AI models~\citep{liu2025muon}. At the training iteration $t$, SGD-momentum and Muon update the parameters as follows: 
\begin{align}\label{muon}
	\begin{cases}
		\Mmi{t} = \mu  \Mmi{t-1} + \Gmi{t}, \\
		\Omi{t} = \texttt{NewtonSchulz5}(\Mmi{t}), \ (\text{only for Muon})   \\
		\Wmi{t} = \Wmi{t-1} - \eta 	\Omi{t},
	\end{cases} 
\end{align}
Compared with SGDM, Muon has an extra Newton-Schulz iteration process which approximately solves $(\Mmi{t}	\Mmi{t}^{\top})^{-\frac{1}{2}}\Mmi{t}$ and indeed theoretically equals to $\Umi{t}\Vmi{t}^\top$, where $\Umi{t}\Sigmai{t}\Vmi{t}^\top$ is the singular value decomposition (SVD) of $\Mmi{t}$. One can observe that the output $\Omi{t} $ of the NS iteration is a normalization version of  $\Mmi{t}$, since intuitively, it can ensure that the update matrices are isomorphic, preventing the weight from learning 
along a few dominant directions~\citep{jordan2024muon}.

 As observed in many works~\citep{jordan2024muon,zhao2021zero,yang2023spectral,cosson2023low,an2025asgo} and Fig.~\ref{fig:condition number and singular value distribution} (a, c) in this work, the updates produced by both SGD-momentum and Adam for the 2D parameters in transformer-based neural networks typically exhibit very high condition number, and are almost low-rank, severely slowing the parameter update speed. Specifically, by applying SVD, $\Mmi{t}$ in Eqn.~\ref{muon} can be written as $\Mmi{t}=\sum_{i=1}^{\min(m,n)}  \Sigmai{t,i,i}\Umi{t,:i}\Vmi{t,:i}^{\top}$, where $ \Sigmai{t,i,i}$ denotes the $i$-th singular value in $ \Sigmai{t}$, and $\Umi{t,:i}$ and $\Vmi{t,:i}^\top$ are respectively the $i$-th column of $\Umi{t}$ and $\Vmi{t}^\top$. Accordingly, the parameter update is indeed performed in these subspaces (directions) $\{\Umi{t,:i}\Vmi{t,:i}^\top\}_{i=1}^{\min(m,n)}$. However, since most singular values $\{\Sigmai{t,i,i}\}$ are close to zero,  the parameters are not well updated in the coressponding subspaces or directions $\{\Umi{t,:i}\Vmi{t,:i}^\top\}$, leading to slow update and convergence. Muon addresses this issue by normalizing (scaling) all singular values $\{\Sigmai{t,i,i}\}$ to ones, and thus resolves the slow update issue of the subspaces with small singular values. 

While Muon addresses these spectral inefficiencies for SGDM, Adam suffers from similar issues due to the low-rank nature of its update matrices. {So it is natural to ask how to integrate a similar normalization technique of  Muon into Adam?} This is important, since as mentioned, Adam-like optimizers often reveal much faster convergence speed than SGD and its momentum version, especially for transformer-based neural networks, and thus integrating Muon with Adam has a big potential for even faster convergence speed than vanilla Muon, which builds upon SGDM. 

\vspace{-0.4em}
\subsection{Column-Normalized Adam}
Sec.~\ref{motivations} shows that the key component of Muon lies in its normalization of the parameter update $\Mmi{t}$ of SGDM. Unfortunately, the normalization of all singular values in Muon is overly aggressive: even extremely small singular values like $10^{-6}$ are scaled to one as shown in Fig.~\ref{fig:condition number and singular value distribution} (b, d). However, the magnitudes of singular values do reflect, to some extent, the desired update strength for model parameters in the corresponding subspaces. Smaller singular values typically suggest that only small updates are needed in those directions at the current training iteration. As a result, the aggressive normalization may distort the structure of the parameter update reflected by its singular values, leading to overly aggressive updates in subspaces where the original singular values were excessively scaled. From the updating formulation $\Mmi{t}=\sum_{i=1}^{\min(m,n)}  \Sigmai{t,i,i}\Umi{t,:i}\Vmi{t,:i}^{\top}$, we know that the maximum permissible learning rate $\eta$ is mainly decided by the top singular values, since too big $\eta$ leads to the too aggressive update of the corresponding subspaces and results in significant loss oscillation. Accordingly, to resolve the side effects of normalization in Muon, one straightforward solution is to use a relatively small learning rate. Although a small learning rate benefits subspaces with small singular values, it hampers the update efficiency in subspaces associated with large singular values, which could accommodate a larger learning rate. Therefore, building upon Muon, it is necessary to design an improved normalization for Adam.

To this end, we first reformulate Muon to align its formulation with Adam. Then, through comparing both formulations, we introduce subspace projection in Muon into the second moment of Adam, and finally adopt the second moment for normalizing parameter update in Adam.  

\textbf{Reformulation of Muon.} Here we reformulate Muon so that its new but equivalent formulation aligns with Adam, allowing us to easily compare their differences and perform algorithmic modification.   
\begin{lemma}
For Muon in Eqn.~\ref{muon}, it can be reformulated into the following equivalent one:
	\begin{align}\label{muon2}
		\begin{cases}
			\Mmi{t} = \mu  \Mmi{t-1} + \Gmi{t}, \\
			\Umi{t}, \Sigmai{t}, \Vmi{t}^\top= \text{\texttt{SVD}}(\Mmi{t}), \\
			\Mmi{t}'= \Umi{t}^{\top}\Mmi{t}, \\
			\Nmi{t} = \diag(\Sigmai{t}) \mathbf{1}^{\top}, \\
			\Wmi{t} = \Wmi{t-1} - \eta \Umi{t} (\Mmi{t}'/ \Nmi{t}).
		\end{cases} 
	\end{align}
	where $\diag(\Sigmai{t})$ maps the singular values into a vector of dimension $\mathbb{R}^{\min(m,n)}$, and $\mathbf{1}\in\mathbb{R}^{n}$ denotes a vector whose  entries are  always ones.  
\end{lemma}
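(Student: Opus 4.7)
The plan is a direct calculation showing that the update direction produced in \eqref{muon2} coincides (up to the usual Newton--Schulz approximation) with $\Umi{t}\Vmi{t}^{\top}$, which is the limit of \texttt{NewtonSchulz5} applied to $\Mmi{t}$ in \eqref{muon}. Since the momentum recursion $\Mmi{t}=\mu\Mmi{t-1}+\Gmi{t}$ and the final parameter update $\Wmi{t}=\Wmi{t-1}-\eta(\cdot)$ have identical form in the two displays, it suffices to prove the matrix identity
\[
\Umi{t}\bigl(\Mmbi{t}/\Nmi{t}\bigr) \;=\; \Umi{t}\Vmi{t}^{\top},
\]
where the division is element-wise. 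Everything then reduces to bookkeeping on the thin SVD $\Mmi{t}=\Umi{t}\Sigmai{t}\Vmi{t}^{\top}$ with $\Umi{t}\in\mathbb{R}^{m\times r}$, $\Sigmai{t}\in\mathbb{R}^{r\times r}$, $\Vmi{t}\in\mathbb{R}^{n\times r}$, and $r=\min(m,n)$.

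The steps I would carry out, in order, are:
\begin{enumerate}
\item Substitute the SVD to simplify $\Mmbi{t}$: using $\Umi{t}^{\top}\Umi{t}=\mathbf{I}_r$, one obtains $\Mmbi{t}=\Umi{t}^{\top}\Mmi{t}=\Sigmai{t}\Vmi{t}^{\top}\in\mathbb{R}^{r\times n}$, whose $(i,j)$-entry is $\Sigmai{t,i,i}\,V_{t,j,i}$.
\item Describe $\Nmi{t}$ entry-wise: by definition $\Nmi{t}=\diag(\Sigmai{t})\mathbf{1}^{\top}\in\mathbb{R}^{r\times n}$ is the outer product of the singular-value vector with the all-ones vector, so its $(i,j)$-entry is simply $\Sigmai{t,i,i}$ (constant across columns).
\item Perform the element-wise division: $(\Mmbi{t}/\Nmi{t})_{i,j}=\Sigmai{t,i,i}V_{t,j,i}/\Sigmai{t,i,i}=V_{t,j,i}=(\Vmi{t}^{\top})_{i,j}$, hence $\Mmbi{t}/\Nmi{t}=\Vmi{t}^{\top}$.
\item Left-multiply by $\Umi{t}$ to conclude $\Umi{t}(\Mmbi{t}/\Nmi{t})=\Umi{t}\Vmi{t}^{\top}$, which is exactly the polar factor $(\Mmi{t}\Mmi{t}^{\top})^{-1/2}\Mmi{t}$ that \texttt{NewtonSchulz5} approximates. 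Plugging this back into the third line of \eqref{muon2} reproduces the Muon update $\Wmi{t}=\Wmi{t-1}-\eta\Umi{t}\Vmi{t}^{\top}$ in \eqref{muon}.
\end{enumerate}

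There is essentially no hard step here; the only subtlety worth flagging in the write-up is twofold. First, one should state the convention that $\Sigmai{t}$ is the thin diagonal SVD factor (not a rectangular matrix), so that the shape $r\times n$ of $\Mmbi{t}$ and $\Nmi{t}$ match and the element-wise division is well-defined; this also implicitly assumes $\Mmi{t}$ has full rank, with a simple limiting / pseudoinverse argument handling zero singular values (or, equivalently, one restricts the identity to the active subspace and notes both sides vanish on its complement). Second, strictly speaking the equivalence to \eqref{muon} is up to the \texttt{NewtonSchulz5} approximation, since that iteration only approximates $\Umi{t}\Vmi{t}^{\top}$; the lemma is therefore an identity between the exact SVD-based formulation in \eqref{muon2} and the idealized (exact) limit of the Newton--Schulz normalization in \eqref{muon}, a point I would make explicit in a short remark.
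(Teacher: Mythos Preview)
Your proposal is correct and follows essentially the same route as the paper: both arguments substitute the thin SVD $\Mmi{t}=\Umi{t}\Sigmai{t}\Vmi{t}^{\top}$, use $\Umi{t}^{\top}\Mmi{t}=\Sigmai{t}\Vmi{t}^{\top}$, and observe that element-wise division by $\diag(\Sigmai{t})\mathbf{1}^{\top}$ cancels the singular values to yield $\Vmi{t}^{\top}$, hence $\Umi{t}(\Mmbi{t}/\Nmi{t})=\Umi{t}\Vmi{t}^{\top}$. The only cosmetic difference is direction---the paper starts from $\Umi{t}\Vmi{t}^{\top}$ and inserts $\Sigmai{t}^{-1}\Umi{t}^{\top}\Umi{t}\Sigmai{t}$ to reach the new form, whereas you start from the new form and reduce to $\Umi{t}\Vmi{t}^{\top}$---and the paper additionally records the symmetric right-projection variant for the case $m\geq n$.
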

\vspace{-0.2em}
See its proof in Appendix~\ref{proofs of lemmas}. One can observe that the formulation in Eqn.~\ref{muon2} replaces the  Newton-Schulz iteration process in Eqn.~\ref{muon} with SVD, and also accordingly modifies other steps.  
Moreover, Muon in Eqn.~\ref{muon2} aligns with Adam's formulation~\ref{adam}, both having first and second moments. For first moment $\Mmi{t}$,  Muon uses similar moving average in Adam to update it, but it further projects its $\Mmi{t}$ into the subspace spanned by $ \Umi{t}^{\top}$. Regarding second moment $\Nmi{t}$, Adam uses the moving average of squared gradient $\Nmi{t} = \beta_2 \Nmi{t-1} + (1-\beta_2){\Gmi{t}}^2$, while Muon directly uses $	\Nmi{t} = \diag(\Sigmai{t}) \mathbf{1}^{\top}$. 
Finally, both Adam and Muon uses element-wise division between first and second moments to update the parameter, but Muon then projects this update back to the original subspace via $\Umi{t}$.

\begin{figure*}[t]
    \centering
    \includegraphics[width=\linewidth]{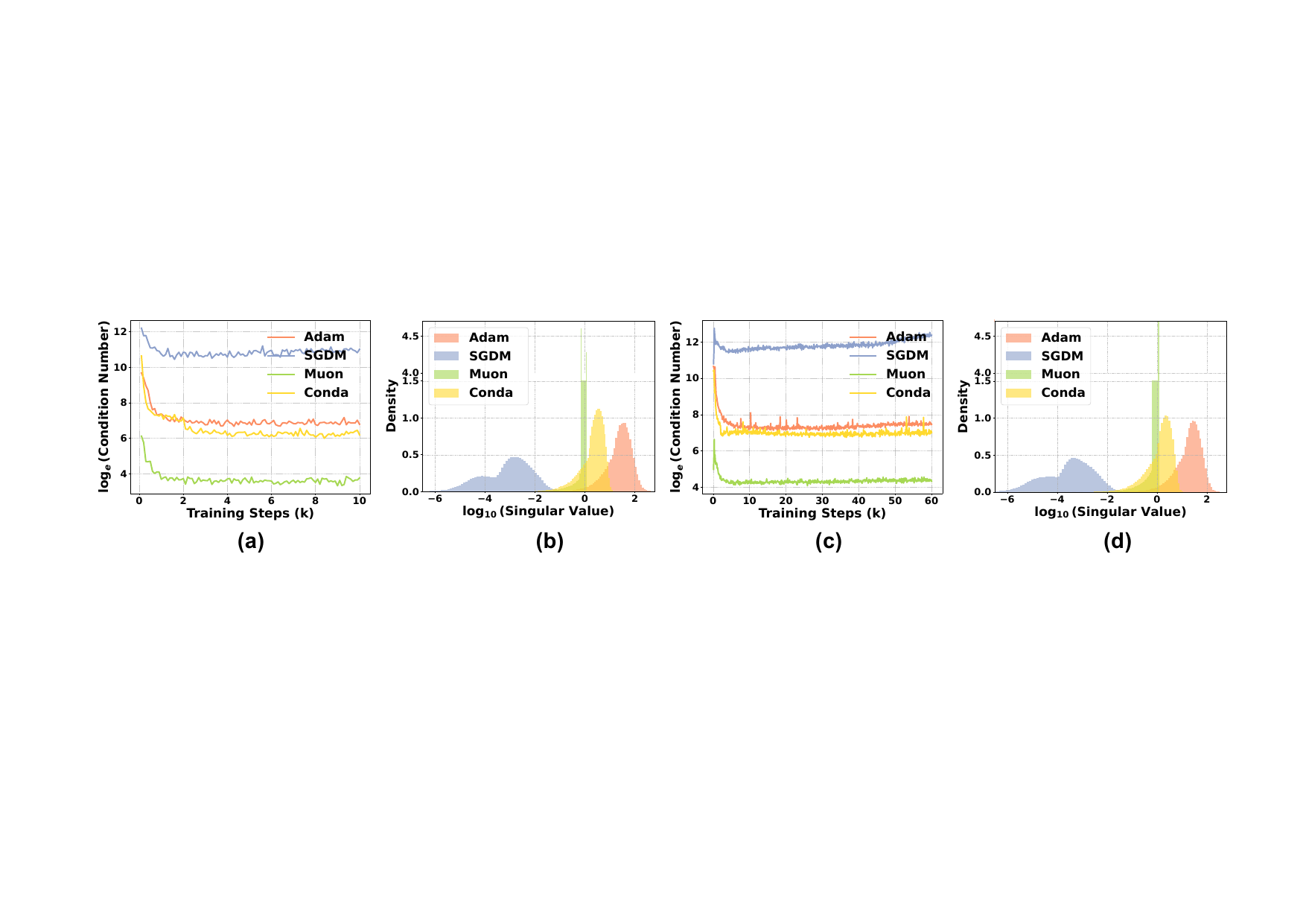}
      \vspace{-7mm}
    \caption{Spectral analysis of optimizer updates on \textbf{LLaMA-60M} (a, b) and \textbf{LLaMA-350M} (c, d). 
    \textbf{(a, c)}: $\log_e$ condition number of 2D update matrices over training steps. 
    \textbf{(b, d)}: Distribution of $\log_{10}$ all singular values of 2D update matrices at the end of training.}
  \label{fig:condition number and singular value distribution}
  \vspace{-5mm}
\end{figure*}

\textbf{Column-Normalized Adam.} 
With the above comparison between Adam and Muon, their key differences lie in their different second moments and the extra subspace projection in Muon. Accordingly, we also perform the subspace projection in Adam to absorb the advantage of Muon. To this end, we first modify the second moment in Adam for subspace projection. This is because the vanilla second moment $\Nmi{t} = \beta_2 \Nmi{t-1} + (1-\beta_2)\Gmi{t}^2$ does not ensure that \( \Nmi{t}  \) resides within the subspace induced by the first moment. To address this, we explicitly constrain the second moment estimate by projecting the stochastic gradient into the subspace:
\begin{equation}
\Nmi{t} = \beta_2 \Nmi{t-1} + (1-\beta_2)(\Umi{t}^{\top}\Gmi{t})^2.
\end{equation}
This modification aligns the statistics of the first and second moments, leading to more stable and coherent updates. Consequently, we arrive at a Column-normalized Adam (Conda) optimizer: 
\begin{align}\label{adam2}
	\begin{cases}
		\Mmi{t} = \beta_1  \Mmi{t-1} + (1-\beta_1)\Gmi{t}, \\
		\Umi{t}, \Sigmai{t}, \Vmi{t}^\top= \texttt{SVD}(\Mmi{t}), \\
		\Mmi{t}'= \Umi{t}^{\top}\Mmi{t}, \\
		\Nmi{t} = \beta_2 \Nmi{t-1} + (1-\beta_2)(\Umi{t}^{\top}\Gmi{t})^2 , \\
		\Wmi{t} = \Wmi{t-1} - \eta  \Umi{t} (\Mmbi{t}/\sqrt{\Nmi{t}}).
	\end{cases} 
\end{align}
Now we compare Muon and our Conda to show two benefits of Conda, including 1) a more adaptive coordinate-wise learning rate in Conda over row-wise earning rate in Muon, and 2) structure-preserved normalization in Conda over structure-unpreserved normalization in Muon.  

We first analyze their different learning rate strategy by comparing Eqn.~\ref{muon2} and \ref{adam2}. Specifically, Muon’s second moment matrix \( \Nmi{t} =\diag(\Sigmai{t}) \mathbf{1}^{\top}\) has identical singular value within each row, i.e., elements in the $i$-th row being the $i$-th singular value $\Sigmai{t,i,i}$. This structure indicates Muon adopts row-wise adaptive learning rate. In contrast, Conda retains the coordinate-wise learning rate like  Adam by inheriting its element-wise second moment computation within the subspace, thereby preserving fine-grained adaptation across individual coordinates. 

Then,  we compare the update of Conda and Muon to show their different normalization strategies. 
\begin{lemma}
	For Muon in Eqn.~\ref{muon2}, its parameter update can be rewritten as
	\begin{align}\label{muon3}
		\begin{split}
			\Omi{t} \!= \!\Umi{t} (\Mmi{t}'/ \Nmi{t}) \!=\! \left[\sum_{i=1}^{m} \!\frac{1}{\Sigmai{t,i,i}} \Umi{t}^{(i)} \Mmi{t,:1},  \sum_{i=1}^{m}\! \frac{1}{\Sigmai{t,i,i}} \Umi{t}^{(i)}\Mmi{t,:2}, \ldots, \sum_{i=1}^{m} \!\frac{1}{\Sigmai{t,i,i}} \Umi{t}^{(i)} \Mmi{t,:n}\right],
		\end{split} 
	\end{align}
	where   $ \Sigmai{t,i,i}$ denotes the $i$-th singular value in $ \Sigmai{t}$, $\Umi{t}^{(i)}=\Umi{t,:i}\Umi{t,:i}^{\top}$ in which  $\Umi{t,:i}$ is the $i$-th column of $\Umi{t}$. In contrast, for optimizer in Eqn.~\ref{adam2}, its update is equivalent to  
		\begin{align}\label{adam3}
		\begin{split}
			\Omi{t} \!= \! \Umi{t}  \frac{\Mmbi{t}}{\sqrt{\Nmi{t}}} \!=\! \left[\sum_{i=1}^{m} \!\frac{1}{\sqrt{\Nmi{t,i,1}}} \Umi{t}^{(i)} \Mmi{t,:1},  \sum_{i=1}^{m}\! \frac{1}{\sqrt{\Nmi{t,i,2}}} \Umi{t}^{(i)}  \Mmi{t,:2}, \ldots, \sum_{i=1}^{m} \!\frac{1}{\sqrt{\Nmi{t,i,n}}} \Umi{t}^{(i)}\Mmi{t,:n}\right],
		\end{split} 
	\end{align}
	where   $ \Nmi{t,i,j}$ denotes the $(i,j)$-th value in matrix  $ \Nmi{t}$.
\end{lemma}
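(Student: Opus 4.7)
The plan is to verify both identities by expanding the relevant matrix products column by column, exploiting the fact that left-multiplication by $\Umi{t}$ acts on an arbitrary column vector $x \in \mathbb{R}^m$ via $\Umi{t} x = \sum_{i=1}^m x_i\,\Umi{t,:i}$, combined with the entry formula $(\Umi{t}^\top \Mmi{t})_{i,j} = \Umi{t,:i}^\top \Mmi{t,:j}$. Composing these two observations is precisely what produces the rank-one operators $\Umi{t}^{(i)} = \Umi{t,:i}\Umi{t,:i}^\top$ that appear on the right-hand sides of both Eqn.~\ref{muon3} and Eqn.~\ref{adam3}.

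For the Muon case (Eqn.~\ref{muon3}), the central observation is that $\Nmi{t} = \diag(\Sigmai{t})\mathbf{1}^\top$ has entries that are constant along each row: $\Nmi{t,i,j} = \Sigmai{t,i,i}$ for every column index $j$. Consequently, the element-wise quotient $\Mmi{t}'/\Nmi{t}$ has $(i,j)$-entry $\frac{1}{\Sigmai{t,i,i}}\,\Umi{t,:i}^\top \Mmi{t,:j}$. Left-multiplying by $\Umi{t}$ and reading off the $j$-th column immediately yields $\sum_{i=1}^{m} \frac{1}{\Sigmai{t,i,i}}\,\Umi{t}^{(i)}\, \Mmi{t,:j}$, which is exactly the $j$-th column of the matrix on the right-hand side of Eqn.~\ref{muon3}.

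The Conda case (Eqn.~\ref{adam3}) is structurally identical. The only change is that the row-constant scalar $1/\Sigmai{t,i,i}$ is replaced by $1/\sqrt{\Nmi{t,i,j}}$, whose value genuinely depends on the column index $j$ because Conda's second moment is built by an element-wise moving average of $(\Umi{t}^\top \Gmi{t})^2$ rather than by broadcasting a singular-value vector via $\diag(\Sigmai{t})\mathbf{1}^\top$. With this single substitution, the same column-by-column expansion produces Eqn.~\ref{adam3} verbatim.

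The main --- and essentially only --- obstacle is careful index bookkeeping: keeping straight that the Muon normalizer depends solely on $i$, so it can be pulled uniformly through the column variable, whereas the Conda normalizer depends on both $i$ and $j$, so it has to remain inside the $i$-sum column by column. Beyond this, no analytic content is required; the entire lemma is an algebraic unfolding of the two update rules.
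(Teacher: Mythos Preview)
Your proposal is correct and follows essentially the same approach as the paper's proof: both proceed column by column, compute the $(i,j)$-entry of $\Mmi{t}'/\Nmi{t}$ (resp.\ $\Mmbi{t}/\sqrt{\Nmi{t}}$) as $\frac{1}{\Nmi{t,i,j}}\,\Umi{t,:i}^\top \Mmi{t,:j}$, and then expand the left-multiplication by $\Umi{t}$ as $\sum_{i=1}^m \Umi{t,:i}(\cdot)_i$, which assembles into the rank-one projectors $\Umi{t}^{(i)}=\Umi{t,:i}\Umi{t,:i}^\top$. The paper displays the intermediate column vector explicitly, but there is no substantive difference.
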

\vspace{-0.5em}
See its proof in Appendix~\ref{proofs of lemmas}. Based on~\ref{muon3} and~\ref{adam3}, one observe that for each column, both Muon and Conda normalize it within a subspace $\Umi{t}^{(i)}=\Umi{t,:i}\Umi{t,:i}^{\top}$ but with different normalization factors. Regarding Muon, its normalization factor is the inverse singular value $1/\Sigmai{t,i,i}$ for corresponding subspace spanned by $\Umi{t,:i}\Umi{t,:i}^{\top}$, and could be too aggressive, leading to overshoot update in corresponding subspaces as introduced above.   Moreover, for Muon, its all columns of  update $	\Omi{t} $ share the same normalized subspace $\sum_{i=1}^{m} \!\frac{1}{\Sigmai{t,i,i}} \Umi{t,:i}\Umi{t,:i}^{\top}$, which   does not consider the different   properties across columns. This could limit the adaptivity of Muon on each column's update.

By comparison, in Conda, for column $\Mmi{:k}$, its normalized subspace projection is $  \sum_{i=1}^{m}\! \frac{1}{\sqrt{\Nmi{t,i,k}}} \Umi{t,:i}\Umi{t,:i}^{\top}$ which adopts  all entries in the corresponding $k$-th column $\Nmi{t,:k}$  as normalization factor. So the normalization in our Conda is column-specific and is thus more adaptive. Moreover, its normalization can also disproportionately compresses relatively large singular values so that singular values are closer for easily seeking a learning rate for sufficient update of all subspaces $\{\Umi{t,:i}\Umi{t,:i}^{\top}\}_{i=1}^{\min(m,n)}$. Compared with Muon, this normalization is milder. What is critical, this mild normalization in Conda can well preserve the structures of the update matrices: the relative order of singular values is not changed. This preserves the desired update strength of the current model parameter in the corresponding subspaces to some extent, and boosts the update and convergence speed.   
This is also supported by the results in Fig.~\ref{fig:condition number and singular value distribution} (b, d), which visualizes the singular value spectrum of the update matrices at the end of training for SGDM, Muon, Adam, and Conda. One can observe that Muon exhibits a sharp peak of singular values around one, reflecting its strong normalization on SGDM. By comparison, Conda’s singular values are smaller in scale and more concentrated than those of Adam, while still preserving the overall shape of Adam’s singular value distribution. This helps Conda to seek a learning rate for sufficient update of all subspaces.

To enhance efficiency, we adopt a lazy updating strategy for the SVD operation in Eqn.~\ref{adam2}. Instead of computing SVD per iteration, we perform SVD for each $T$ iterations, where we set $T=2,000$ which works well across all experiments. Finally, we also consider the omitted bias correction steps and the small positive constant $\epsilon$ in the second moment for numeric stability. The complete algorithm, including all implementation details, is provided in the Appendix~\ref{pseudocode}. We also include a detailed comparison between Conda and SOAP in the Appendix~\ref{difference_from_soap}.

\section{Experiments}
\label{sec:experiments}
\subsection{LLM Pre-training}
To demonstrate the generality of Conda, we conduct pre-training on both the LLaMA series~\citep{touvron2023llama} (60M–1B) and the GPT-2 series~\citep{radford2019language} (125M, 355M). We compare Conda with widely used optimizers, including AdamW~\citep{loshchilov2017decoupled}, Adafactor~\citep{shazeer2018adafactor}, SOAP~\citep{vyas2024soap}, and Muon~\citep{jordan2024muon}. We exclude memory-efficient optimizers such as GaLore~\citep{zhao2024galore} and Adam-mini~\citep{zhang2024adam}, which generally match or underperform AdamW, making comparisons less meaningful. 

\begin{table*}[t]
\setlength{\tabcolsep}{10pt}
\renewcommand{\arraystretch}{1.2}
\setlength{\tabcolsep}{6pt}
\small
\centering
\caption{\textbf{Pre-training Results on Large Language Models}. Comparison of various algorithms on pre-training LLaMA and GPT-2 models of different sizes. Validation perplexity ($\downarrow$) is reported. Results marked with * are collected from~\citet{zhao2024galore,liu2023sophia,zhu2024apollo}. 
}
\label{tab:pre-training results}
\begin{tabular}{l!{\color{tabvline}\vrule}cccc!{\color{tabvline}\vrule}cc}
\toprule
\multicolumn{1}{l!{\color{tabvline}\vrule}}{\multirow{2}{*}{\textbf{Method}}}    & \multicolumn{4}{c!{\color{tabvline}\vrule}}{\textbf{LLaMA}}    & \multicolumn{2}{c}{\textbf{GPT2}} \\ 
\multicolumn{1}{c!{\color{tabvline}\vrule}}{}   &  60M   &  130M  &  350M  &  1B    &  125M        & 355M       \\
\midrule
AdamW*     & 34.06 & 25.08 & 18.80 & 15.56 & 18.56 & 14.75     \\
APOLLO*    & 31.55 & 22.94 & 16.85 & 14.20 & -- & --    \\
Adafactor  & 29.44 & 22.43 & 17.37 & 14.87 & 18.35 & 14.74     \\
SOAP       & 29.16 & 22.03 & 16.75 & 14.55 & 18.36 & 14.89    \\
Muon       & 29.89 & 22.15 & 16.51 & 14.17 & 18.20 & 14.77          \\
Conda (Ours)      & \textbf{28.32} & \textbf{21.38} & \textbf{16.44} & \textbf{13.59} & \textbf{17.40} & \textbf{13.92}       \\ 
\midrule
Training Tokens & 1.1B  &2.2B &6.4B &13.1B  & 49.2B  & 49.2B           \\ 
\bottomrule
\end{tabular}
\end{table*}

\textbf{Results on LLaMA series.}
Following \citet{lialin2023stack} and \citet{zhao2024galore}, we pretrain the vanilla LLaMA series models from scratch on the C4 dataset \citep{raffel2020exploring}.
For all LLaMA models, we follow~\citet{zhao2024galore} and set the batch size to 512, the maximum sequence length to 256, and use the bfloat16 precision format. 
For Conda, we employ a unified set of hyperparameters across all model sizes ranging from 60M to 1B parameters. We use a learning rate of 0.01, betas of (0.9, 0.99), and a update frequency of $T = 2,000$. See detailed configurations in the Appendix~\ref{detailed_pre-training_setting}.

As shown in Table~\ref{tab:pre-training results}, Conda achieves lower perplexity than all baselines across the LLaMA models ranging from 60M to 1B parameters, demonstrating superior performance.
Specifically, as illustrated in Fig.~\ref{fig:llama_val_loss_curves}, Conda consistently achieves over 2× the convergence speed of AdamW across all model sizes, in terms of both training steps and training time. In particular, on the LLaMA-1B model, Conda achieves 2.7× the convergence speed of AdamW with respect to training steps, and approximately 2.5× with respect to training time. Moreover, when compared to the second-best baseline at each model scale, Conda still demonstrates clear advantages. On LLaMA-60M and LLaMA-130M, Conda achieves 1.33× and 1.38× the convergence speed of SOAP in terms of training steps, and 1.48× and 1.37× in training time, respectively. For larger models such as LLaMA-350M and LLaMA-1B, Conda reaches 1.25× and 1.69× the convergence speed of Muon in training steps, and 1.48× and 1.80× in training time, respectively. These results confirm that Conda consistently outperforms not only AdamW, but also the strongest baseline at each scale, achieving higher convergence efficiency.

\begin{figure*}[t]
    \centering
    \includegraphics[width=\linewidth]{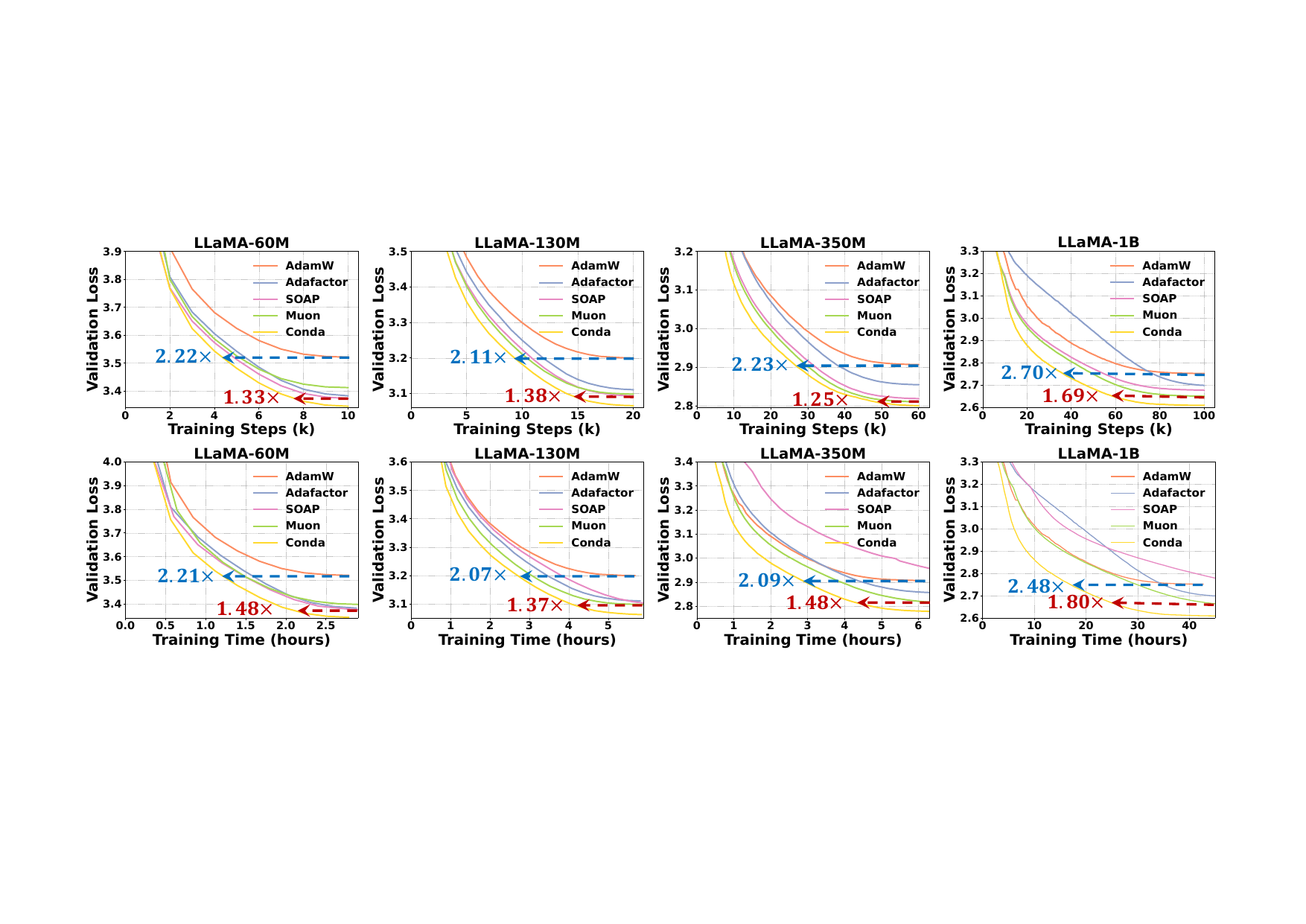}
    \caption{Validation loss curves for LLaMA models over training steps (top) and time (bottom).}
    \label{fig:llama_val_loss_curves}
\end{figure*}

\begin{figure*}[t]
    \centering
    \includegraphics[width=\linewidth]{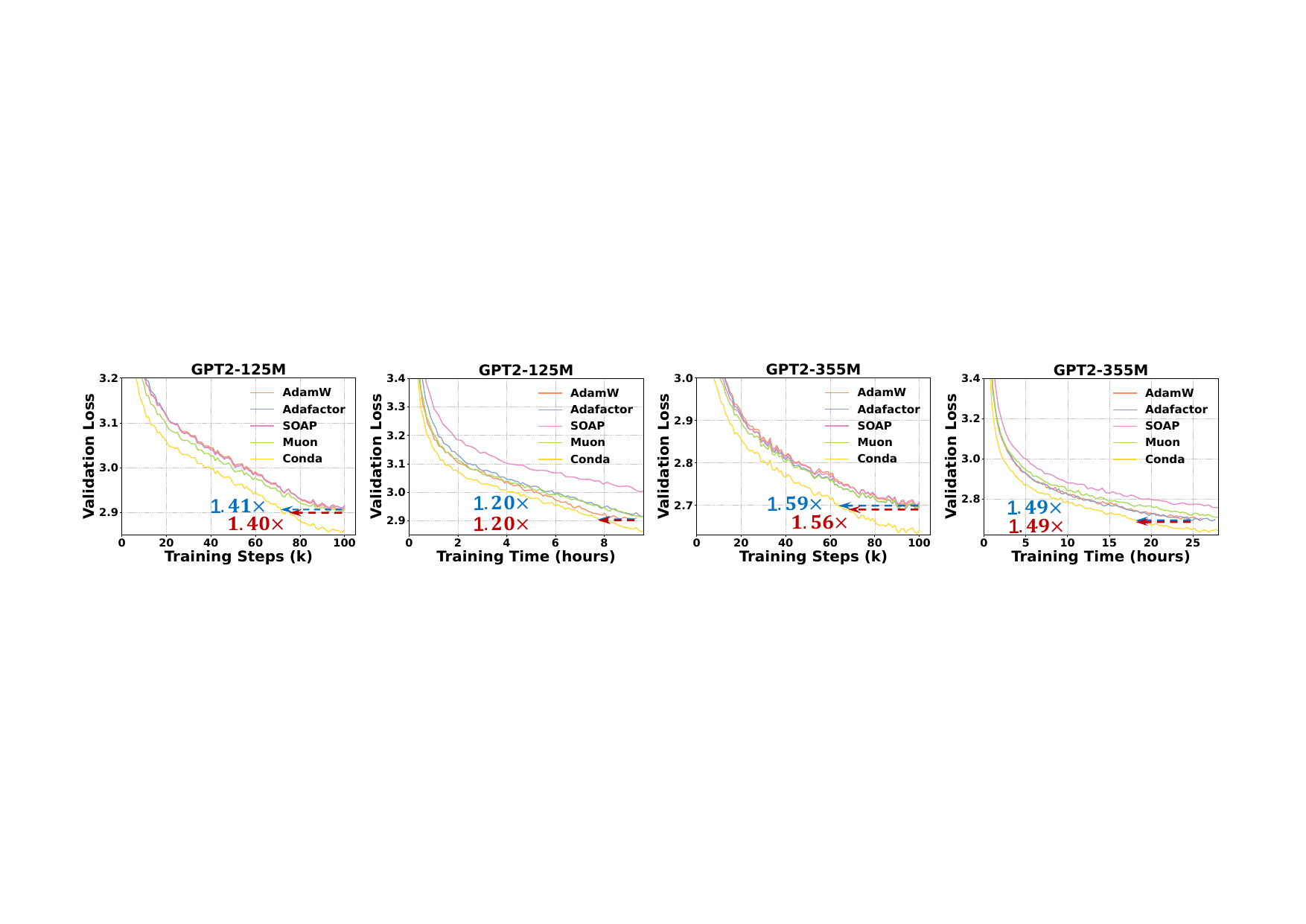}
    \caption{Validation loss curves for GPT2 models over training steps and time.}
    \label{fig:gpt2_val_loss_curves}
\end{figure*}
\textbf{Results on GPT2 series.} 

Following the experimental setup in Sophia~\citep{liu2023sophia}, we pre-train GPT-2 Small (125M parameters) and GPT-2 Medium (355M parameters)~\citep{radford2019language} on the OpenWebText dataset~\citep{Gokaslan2019OpenWeb} using the nanoGPT implementation~\citep{karpathy2022nanogpt}.
We also use a batch size of 480,  a sequence length of 1024, a cosine learning rate decay schedule  with 2000 warm-up iterations,  global gradient clipping with a threshold of 1.0, and train all models for 100,000 steps. See detailed configures in Appendix~\ref{detailed_pre-training_setting}.

As shown in Table~\ref{tab:pre-training results}, Conda consistently outperforms all baselines on both GPT-2 Small (125M) and GPT-2 Medium (355M) models.
Specifically, Fig.~\ref{fig:gpt2_val_loss_curves} shows that  Conda achieves 1.41× and 1.56× the convergence speed of AdamW in terms of training steps on GPT2-125M and GPT2-355M, respectively. When measured by training time, Conda is 1.20× and 1.49× convergence speed of AdamW on GPT2-125M and GPT2-355M. In contrast, all other baseline optimizers perform comparably to or worse than AdamW, both in terms of training steps and wall-clock training time, further highlighting Conda’s superior efficiency and robustness.

\textbf{Performance of Downstream Tasks.} 
While Conda achieves lower perplexity across LLaMA and GPT-2 models of various scales, perplexity alone may not fully capture downstream effectiveness~\citep{jaiswal2023compressing,liu2023same,springer2025overtrained}. To further validate model quality, we evaluate the zero-shot performance of the pre-trained models on diverse tasks, covering both commonsense and mathematical reasoning~\citep{clark2019boolq,bisk2020piqa,wang2018glue,sakaguchi2021winogrande,clark2018think,zellers2019hellaswag,mihaylov2018can,welbl2017crowdsourcing,amini2019mathqa}.
Following~\citet{zhu2024apollo}, we adopt the \texttt{lm-evaluation-harness}~\citep{eval-harness} for assessment. 
As shown in Table~\ref{tab:downstream}, Conda achieves the highest average accuracy on both LLaMA-350M (44.0\%) and LLaMA-1B (45.8\%), while remaining time-efficient. For LLaMA-350M, Conda outperforms both Muon and SOAP with less training time, and significantly surpasses AdamW. For LLaMA-1B, the model trained with Conda$^{\dagger}$, using only half the total training steps, achieves an average accuracy of 44.9\%, surpassing both AdamW and Muon. Fig.~\ref{fig:downstream_performance_curves_and_ablation_study} further illustrates the progression of zero-shot average accuracy during pre-training.
Across both training steps and wall-clock time, Conda consistently outperforms all baselines during the entire training process.

\begin{table*}[t]
\centering
\caption{Zero-shot performance of LLaMA-350M and LLaMA-1B models pretrained with different optimizers on commonsense and math reasoning tasks. All results are reported as accuracy ($\uparrow$\,\%). \textit{Time} refers to the corresponding training time, and \textit{PPL} denotes perplexity. Conda$^{\dagger}$ indicates the model trained with Conda for only half of the total training steps.}
\setlength{\tabcolsep}{3pt}
\resizebox{\textwidth}{!}{
\begin{tabular}{c|l|c|c|cccccccccc|c}
    \toprule
    \multicolumn{2}{c|}{\textbf{LLaMA}} & \small \textbf{\textit{Time}} & \small \textbf{\textit{PPL}} & \small \textbf{BoolQ} & \small \textbf{RTE} & \small \textbf{HS} & \small \textbf{WG} & \small \textbf{OBQA} & \small \textbf{ARC-e} & \small \textbf{ARC-c} & \small \textbf{PIQA} & \small \textbf{SciQ} & \small \textbf{MathQA} & \small \textbf{Avg} \\
    \midrule
    \multirow{6}{*}{\rotatebox[origin=c]{90}{\textbf{350M}}} 
    & \small AdamW & \small 6.2h & \small 16.86 & \small 58.2 & \small 53.4 & \small 31.4 & \small 51.5 & \small 16.6 & \small 45.2 & \small 18.8 & \small \textbf{66.4} & \small 70.3 & \small 21.2 & \small 43.3 \\
    & \small Adafactor & \small 6.9h & \small 17.37 & \small 53.1 & \small 50.9 & \small 31.0 & \small 51.3 & \small 15.6 & \small 44.6 & \small 19.6 & \small 65.3 & \small 68.8 & \small 21.0 & \small 42.1 \\
    & \small SOAP & \small 15.0h & \small 16.75 & \small 58.9 & \small 48.0 & \small 31.5 & \small 51.8 & \small 17.0 & \small \textbf{46.7} & \small \textbf{20.0} & \small 66.0 & \small 72.5 & \small \textbf{22.0} & \small 43.4 \\
    & \small Muon & \small 7.7h & \small 16.51 & \small 54.7 & \small \textbf{54.2} & \small \textbf{31.8} & \small \textbf{52.9} & \small 17.4 & \small 46.4 & \small 19.1 & \small 66.1 & \small 73.5 & \small 21.6 & \small 43.8 \\
    & \small Conda$^{\bm{\dagger}}$ (Ours) & \small 3.3h & \small 16.44 & \small \textbf{60.3} & \small 53.4 & \small 31.1 & \small 52.6 & \small \textbf{17.6} & \small 45.2 & \small 18.9 & \small 65.6 & \small \textbf{75.0} & \small 21.4 & \small \textbf{44.1} \\
    \midrule
    \multirow{6}{*}{\rotatebox[origin=c]{90}{\textbf{1B}}} 
    & \small AdamW & \small 44.5h & \small 15.77 & \small 56.2 & \small 54.5 & \small 32.8 & \small 49.6 & \small 19.4 & \small 48.0 & \small 21.3 & \small 67.8 & \small 72.2 & \small 21.0 & \small 44.3 \\
    & \small Adafactor & \small 47.3h & \small 14.87 & \small \textbf{59.0} & \small \textbf{56.0} & \small 33.5 & \small \textbf{53.3} & \small 18.8 & \small 48.5 & \small 21.3 & \small 67.6 & \small 72.4 & \small 21.5 & \small 45.2 \\
    & \small SOAP & \small 116.2h & \small 14.55 & \small 58.4 & \small \textbf{56.0} & \small 34.2 & \small 51.4 & \small 18.8 & \small 49.5 & \small 21.3 & \small 68.9 & \small 75.1 & \small 22.0 & \small 45.6 \\
    & \small Muon & \small 61.9h & \small 14.17 & \small 55.4 & \small 50.5 & \small 34.7 & \small 51.1 & \small 17.2 & \small 48.1 & \small \textbf{21.9} & \small \textbf{69.4} & \small 75.0 & \small \textbf{22.2} & \small 44.6 \\
    & \small Conda$^{\bm{\dagger}}$ (Ours) & \small 24.2h & \small 14.65 & \small 53.6 & \small 51.6 & \small 34.6 & \small 52.5 & \small 19.8 & \small 49.2 & \small 21.3 & \small 68.7 & \small 75.4 & \small \textbf{22.2} & \small 44.9 \\
    & \small Conda (Ours) & \small 48.4h & \small 13.59 & \small 56.2 & \small 53.1 & \small \textbf{36.2} & \small 52.5 & \small \textbf{20.4} & \small \textbf{50.5} & \small 21.6 & \small 69.0 & \small \textbf{77.9} & \small 21.9 & \small \textbf{45.8} \\
    \bottomrule
\end{tabular}
}
\label{tab:downstream}
\end{table*}

\subsection{LLM Fine-tuning}
Following~\citet{liu2024dora}, we evaluate the effectiveness of Conda in supervised fine-tuning. Since LoRA~\citep{hu2022lora} is one of the most widely adopted parameter-efficient fine-tuning methods, we adopt it as the fine-tuning method and compare Conda with the standard AdamW baseline under identical LoRA settings. Specifically, we fine-tune LLaMA-7B, LLaMA3.2-1B, and LLaMA3-8B on the Commonsense170K dataset~\citep{hu2023llm}, and assess their generalization on commonsense reasoning benchmarks. Detailed experimental settings are provided in the Appendix~\ref{detailed_fine-tuning_setting}.

Table~\ref{tab:llm_finetuning} presents the performance of models fine-tuned with Conda and AdamW using LoRA across three LLaMA model scales.
For LLaMA-7B, Conda consistently outperforms AdamW across all benchmarks, achieving the highest average accuracy of 78.8\%. Notably, it surpasses strong baselines such as DoRA~\citep{liu2024dora} in most tasks. On the smaller LLaMA3.2-1B model, Conda delivers a substantial improvement over AdamW (67.0\% vs. 59.2\%), especially on PIQA, HellaSwag (HS), and ARC-e.
For the larger LLaMA3-8B model, Conda also leads with an average accuracy of 84.1\%, outperforming AdamW by 3.3 points.
These results demonstrate that Conda not only generalizes better across tasks, but also scales effectively with model size. .
\begin{table}[htb]
\small
\centering
\caption{Accuracy ($\uparrow$\ \%) on commonsense reasoning tasks after fine-tuning on the Commonsense170K dataset. We compare Conda with AdamW across multiple LLaMA model scales. Results for all methods except LoRA (Conda) are taken from prior work~\citep{liu2024dora,zhu2024apollo}.}
\label{tab:llm_finetuning}
\setlength{\tabcolsep}{3pt}
\begin{tabular}{l|c|cccccccc|c}
\toprule
\textbf{Model}             & \textbf{Method}     & \textbf{BoolQ} & \textbf{PIQA} & \textbf{SIQA} & \textbf{HS}   & \textbf{WG}   & \textbf{ARC-e} & \textbf{ARC-c} & \textbf{OBQA} & \textbf{Avg}  \\ 
\midrule
\multirow{5}{*}{LLaMA-7B}
&Prefix          & 64.3 & 76.8  & 73.9 & 42.1 & 72.1        & 72.9        & 54.0 & 60.6        & 64.6        \\
&Series          & 63.0 & 79.2  & 76.3 & 67.9 & 75.7        & 74.5        & 57.1 & 72.4        & 70.8        \\
&Parallel        & 67.9 & 76.4  & \textbf{78.8} & 69.8 & 78.9 & 73.7 & 57.3 & 75.2 & 72.2 \\
&DoRA        & 69.7 & \textbf{83.4}  & 78.6 & 87.2 & \textbf{81.0} & 81.9 & 66.2 & 79.2 & 78.4 \\
& LoRA (AdamW)                & 68.9           & 80.7          & 77.4          & 78.1          & 78.8          & 77.8           & 61.3           & 74.8          & 74.7          \\
& LoRA (Conda) & \textbf{70.6}  & \textbf{83.4} & \textbf{78.8} & \textbf{87.3} & 80.7 & \textbf{82.2}  & \textbf{67.0}  & \textbf{80.0} & \textbf{78.8} \\
\midrule
\multirow{2}{*}{LLaMA3.2-1B} 
& LoRA (AdamW)                & 63.6           & 63.3          & \textbf{71.7}          & 19.1          & 67.6          & 67.3           & \textbf{53.0}           & \textbf{68.2}          & 59.2          \\
& LoRA (Conda) & \textbf{63.9}  & \textbf{75.1} & 71.5 & \textbf{66.9} & \textbf{68.4} & \textbf{70.5}  & 52.0  & 67.8 & \textbf{67.0} \\
\midrule
\multirow{2}{*}{LLaMA3-8B}        
& LoRA (AdamW)  & 70.8   & 85.2  & \textbf{79.9}  & \textbf{91.7} & 84.3  & 84.2 & 71.2           & 79.0          & 80.8          \\
& LoRA (Conda) & \textbf{74.9}  & \textbf{88.7} & 78.6 & 87.3 & \textbf{86.0} & \textbf{90.0}  & \textbf{79.8}  & \textbf{87.6} & \textbf{84.1} \\
\bottomrule                
\end{tabular}
\end{table}

\begin{figure*}[t]
    \centering
    \includegraphics[width=\linewidth]{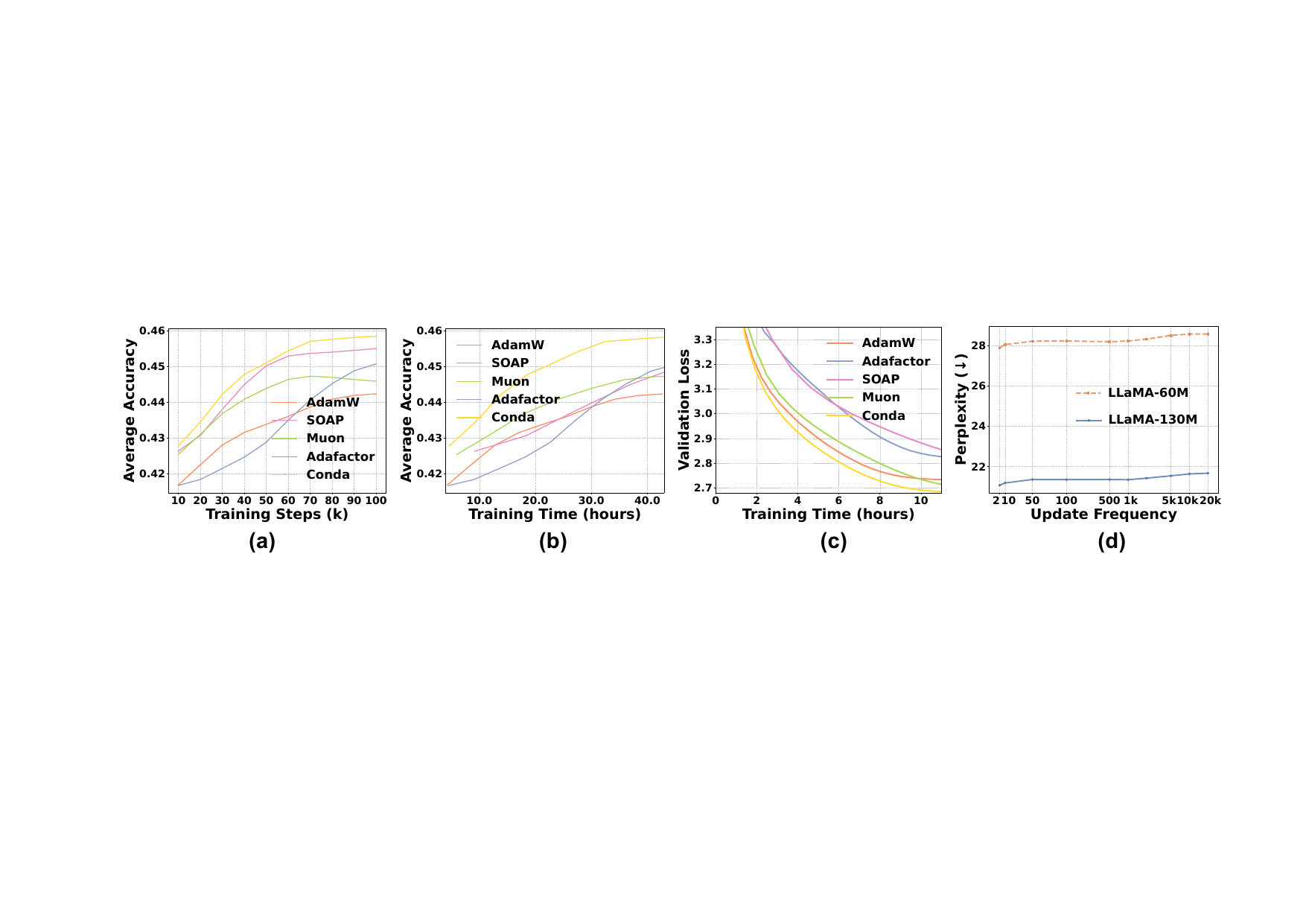}
    \caption{\textbf{(a)} Zero-shot average accuracy on downstream tasks plotted against training steps.  
    \textbf{(b)} Same as (a), but plotted against training time.  
    \textbf{(c)} Validation loss curve on LLaMA-1B with sequence length 1024.  
    \textbf{(d)} Perplexity ($\downarrow$) under different subspace update frequencies \( T \).}
    \label{fig:downstream_performance_curves_and_ablation_study}
\end{figure*}

\begin{figure}[t]
  \centering
  \begin{minipage}[t]{0.48\textwidth}
    \centering
    \small
    \captionof{table}{Perplexity ($\downarrow$) comparison with and without subspace projection in Conda.}
    \label{tab:conda_projection}
    \begin{tabular}{l|cccc}
    \toprule
    \textbf{Method} & \textbf{60M} & \textbf{130M} & \textbf{350M} & \textbf{1B} \\
    \midrule
    Conda & 28.32 & 21.38 & 16.44 & 13.59 \\
    No proj. & 28.65 & 21.88 & 16.70 & Fail \\
    \bottomrule
    \end{tabular}
  \end{minipage}
  \hfill
  \begin{minipage}[t]{0.48\textwidth}
    \centering
    \small
    \captionof{table}{Perplexity ($\downarrow$) of Conda on LLaMA-130M with varying \( \beta_1 \) and \( \beta_2 \).}
    \label{tab:sensitivity}
    \begin{tabular}{l|cccc}
    \toprule
    $\beta_1$/$\beta_2$ & \textbf{0.95} & \textbf{0.99} & \textbf{0.995} & \textbf{0.999} \\
    \midrule
    \textbf{0.9}  & 21.43 & 21.44 & 21.50 & 21.97 \\
    \textbf{0.95} & 21.84 & 21.81 & 21.79 & 22.26 \\
    \bottomrule
    \end{tabular}
  \end{minipage}
\end{figure}

\begin{table}[t]
\centering
\small
\renewcommand{\arraystretch}{1.2} 
\caption{Peak GPU memory usage (in GB) of different optimizers. Results for 60M use 1 GPU, 130M use 2 GPUs, and 350M/1B use 8 GPUs. Batch size is reported per GPU.}
\label{tab:memory usage}
\begin{tabular}{l|c|ccccc}
\toprule
\textbf{Model Size} & \textbf{Batch Size (per GPU)} & \textbf{AdamW} & \textbf{Adafactor} & \textbf{Muon} & \textbf{SOAP} & \textbf{Conda} \\ 
\midrule
60M  & 256 & 24.97G  & 24.96G   & 24.92G & 25.49G & 25.00G   \\ 
130M & 256 & 40.98G  & 40.98G   & 40.82G & 42.76G & 41.09G   \\
350M & 64  & 27.51G  & 27.50G   & 26.95G & 33.64G & 27.84G   \\ 
1B   & 32  & 35.81G  & 35.81G   & 33.56G & 60.17G & 37.13G    \\ 
\bottomrule
\end{tabular}
\end{table}

\subsection{Ablation Study}
\textbf{Second moment estimation without subspace projection.}
We ablate Conda’s subspace-based second-moment estimation by replacing it with a vanilla estimator. As shown in Table~\ref{tab:conda_projection}, for smaller models, the lower parameter count and shorter training duration render the optimization process more robust, so removing subspace projection does not substantially hinder convergence. However, in larger models such as LLaMA-1B, training dynamics become more sensitive to such inconsistencies.
This underscores the necessity of aligning second moment estimates within the subspace.

\textbf{Sequence Length.}
To test Conda under longer sequence lengths, we increase the input sequence length from 256 to 1,024 on LLaMA-1B, while keeping all other training settings the same as the pre-training experiments.   Fig.~\ref{fig:downstream_performance_curves_and_ablation_study} (c) shows that Conda consistently achieves lower validation loss than all baselines in this long-sequence setting, indicating Conda's strong generalization performance.

\textbf{Subspace Update Frequency.}
We conduct an ablation on the update frequency $T$ using LLaMA-60M and 130M. As shown in Fig.~\ref{fig:downstream_performance_curves_and_ablation_study} (d), Conda maintains stable perplexity across a wide range of $T$ values, from 2 to 20000 steps. While $T\!=\!500$ or $1,000$ appears to be a sweet spot in the figure, we adopt $T\!=\!2000$ in our main experiments in light of wall-clock efficiency and scalability to larger pre-training; see the Appendix~\ref{more_ablation_study} for details. This suggests that the subspace can be updated infrequently without degrading performance, reducing computational overhead and eliminating the need for sensitive tuning of the update interval. %

\textbf{Hyperparameter Sensitivity.}
To demonstrate the robustness of Conda, we conduct a hyperparameter sensitivity analysis on the LLaMA‑130M model by varying \( \beta_1 \in \{0.9, 0.95\} \) and \( \beta_2 \in \{0.95, 0.99, 0.995, 0.999\} \), while keeping all other settings consistent with pre-training.  
As shown in Table~\ref{tab:sensitivity}, Conda achieves consistently low perplexity across all configurations, indicating strong robustness to hyperparameter choices and reduced tuning burden in practice. We further assess Conda's sensitivity to the learning rate, results in Appendix~\ref{more_ablation_study} show similar robustness.

\textbf{Memory Usage.}
We compare the peak GPU memory usage across models ranging from LLaMA 60M to 1B. To reflect real-world training conditions, we use practical configurations including per-GPU batch size, number of GPUs, etc. As shown in Table~\ref{tab:memory usage}, Conda introduces only a modest increase in memory usage compared to AdamW, with a difference of less than 1–2 GB in most settings. Therefore, the memory overhead of Conda remains practical for large-scale training scenarios.

\section{Conclusion}
\label{sec:conclusion}
In this paper, we introduced Conda, a novel optimizer addressing spectral inefficiencies in Adam-based training of transformer architectures. By incorporating column-specific spectral normalization and maintaining Adam's coordinate-wise adaptivity, Conda achieves faster convergence. Experimental results on LLaMA models demonstrate Conda’s substantial improvements, achieving $2{\sim}2.5\times$ the convergence speed of AdamW in terms of both training steps and training time. Extensive ablation studies confirm its robustness across diverse training conditions, highlighting Conda as a promising optimizer for efficient large-scale LLM training. 

\noindent{\textbf{Limitations.}}
Owing to computational resource constraints, we have not yet evaluated Conda on larger-scale models (e.g., 13B parameters) or Mixture-of-Experts (MoE) architectures. While the results on models up to 1B parameters are encouraging, future work is needed to assess the scalability and applicability of Conda in these more complex and resource-intensive settings.

\clearpage

\bibliography{iclr2026_conference}
\bibliographystyle{iclr2026_conference}

\clearpage
\appendix
\renewcommand{\thelemma}{\arabic{lemma}}
\setcounter{lemma}{0}
\renewcommand{\theequation}{\arabic{equation}}
\setcounter{equation}{0}

\section{APPENDIX}
\subsection{More Experimental Results}\label{more_experimental_results}
\textbf{Comparison with Sophia on GPT-2 Pre-training.}~~
To provide a more comprehensive evaluation of Conda, we additionally compare it with Sophia~\citep{liu2023sophia}, an optimizer that has demonstrated strong performance on GPT-2 pre-training. The implementation of Sophia follows the best-performing configuration reported in the original publication~\citep{liu2023sophia}.
\begin{figure}[h]
  \centering
  \begin{subfigure}{0.48\textwidth}
    \includegraphics[width=\linewidth]{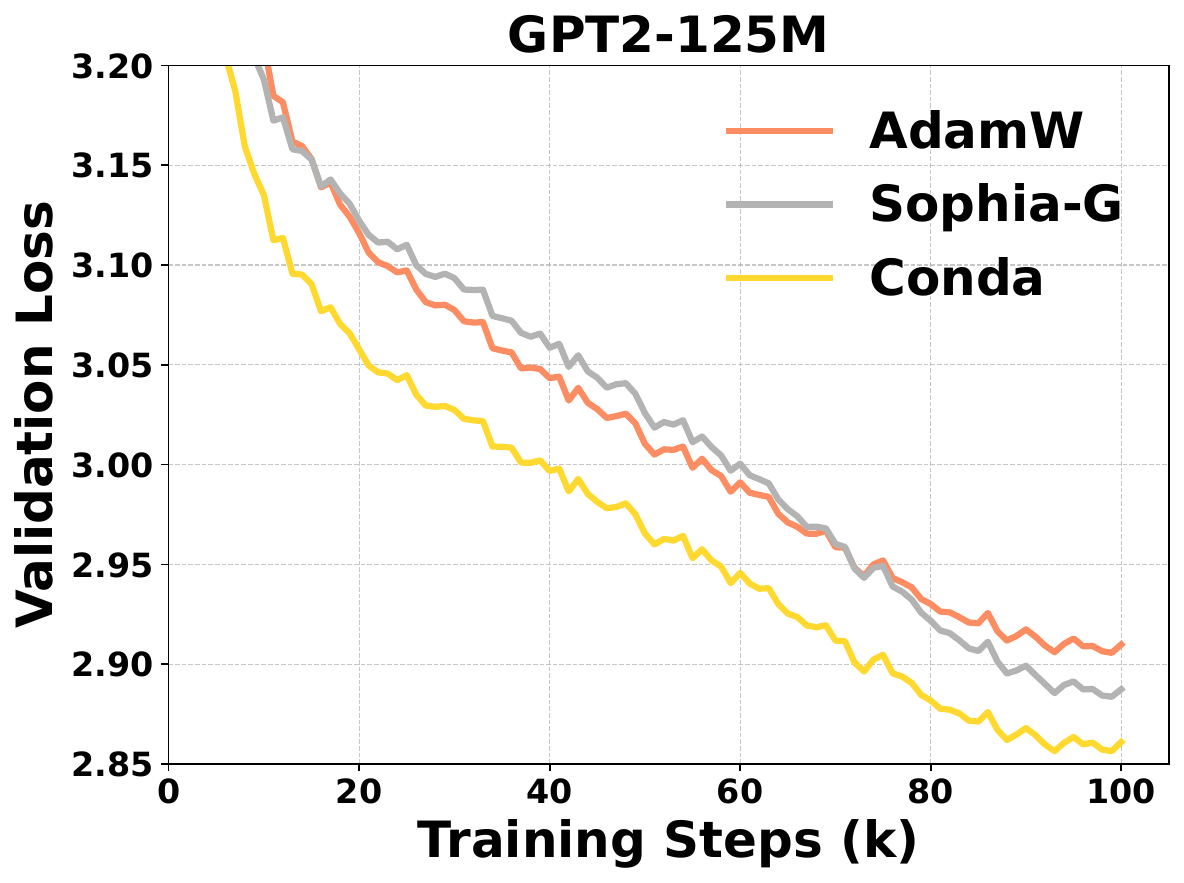}
  \end{subfigure}
  \begin{subfigure}{0.48\textwidth}
    \includegraphics[width=\linewidth]{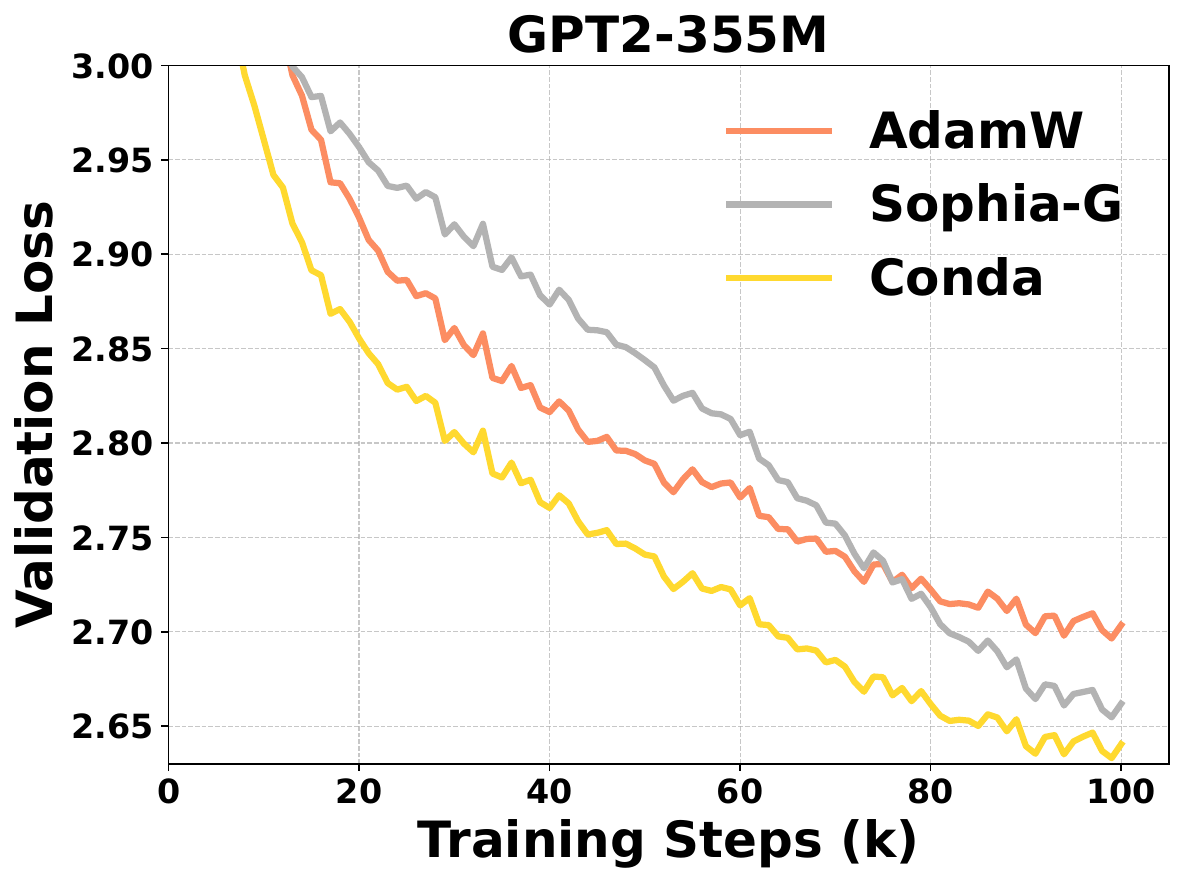}
  \end{subfigure}
  \hfill
    \begin{subfigure}{0.48\textwidth}
    \includegraphics[width=\linewidth]{img/gpt2_355m_eval_loss_step_sophia.pdf}
  \end{subfigure}
  \begin{subfigure}{0.48\textwidth}
    \includegraphics[width=\linewidth]{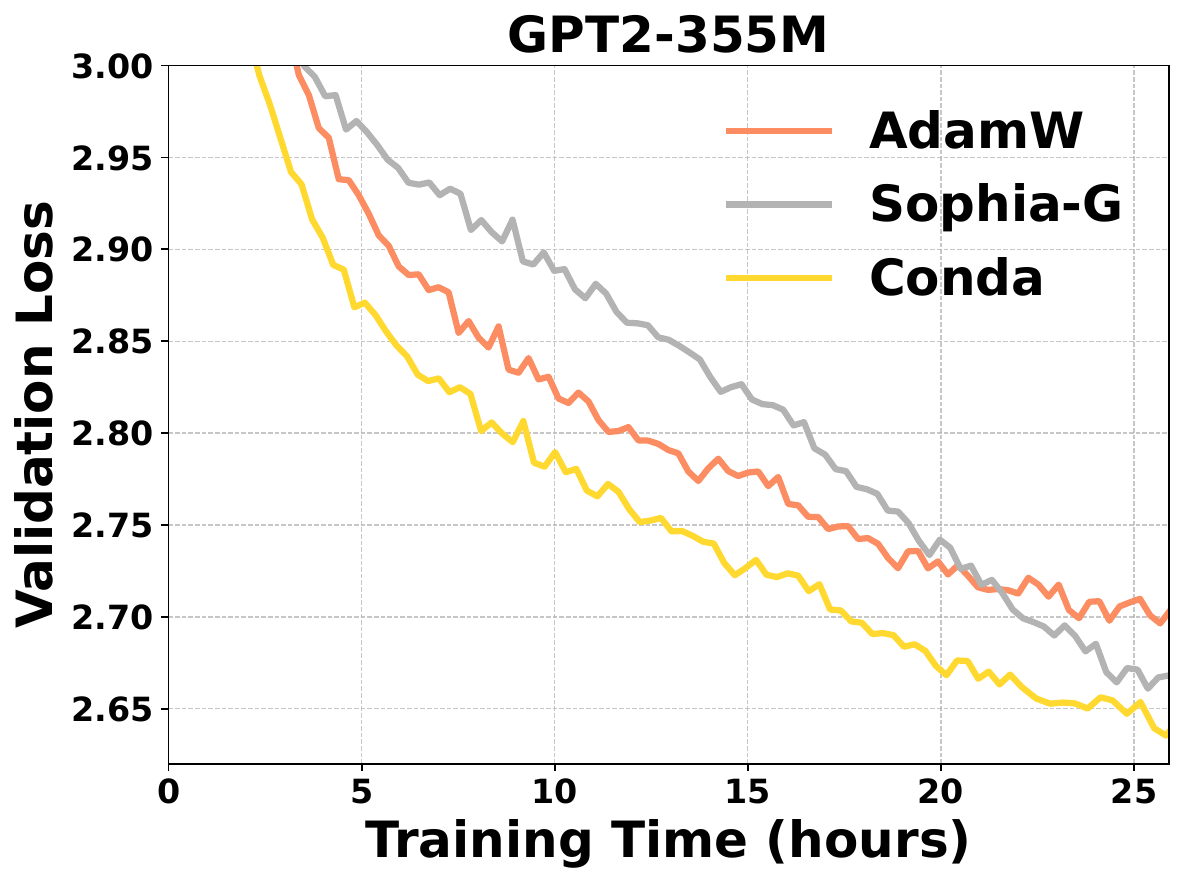}
  \end{subfigure}
  \caption{Validation loss curves of GPT2 pre-training with Conda (Ours), AdamW, and Sophia-G.}
  \label{fig:gpt2_val_loss_sophia}
\end{figure}

As shown in Fig.~\ref{fig:gpt2_val_loss_sophia}, Conda consistently achieves lower validation loss and faster convergence than both AdamW and Sophia-G on GPT2-125M and GPT2-355M. While Sophia-G shows slight improvements over AdamW, it consistently lags behind Conda, further highlighting Conda’s superior optimization performance.

\subsection{More Ablation Study}\label{more_ablation_study}
\textbf{Learning-Rate Sensitivity of Different Optimizers}~~
In our experiments, Conda is used with larger learning rates than Muon or Adam (see Table~\ref{tab:optimal_lr_for_llama}). To ensure a fair comparison, we evaluate all three optimizers on LLaMA-130M under relatively large learning rates and report the final validation perplexity. As shown in Table~\ref{tab:llama130m_lr_opt}, only Conda trains stably and achieves rapid convergence at these higher learning rates, whereas Muon and Adam exhibit pronounced training instabilities with large loss spikes and ultimately fail to converge. These findings are consistent with previous study~\citep{zhao2024galore}, which also reported that AdamW becomes unstable beyond a learning rate of $1\mathrm{e}{-3}$. In addition, for Muon and Adam, we conducted an additional sweep over smaller learning rates on LLaMA-60M to 350M; the search procedure is detailed in Appendix~\ref{detailed_pre-training_setting}. We found that both optimizers achieve their best performance at a learning rate of $1\mathrm{e}{-3}$.

\begin{table}[t]
\centering
\small
\caption{Validation perplexity across learning rates for different optimizers on LLaMA-130M pre-training.}
\label{tab:llama130m_lr_opt}
\setlength{\tabcolsep}{6pt}
\begin{tabular}{lccccc}
\toprule
\textbf{LLaMA-130M} & 3e-3 & 5e-3 & 7e-3 & 1e-2 & 2e-2 \\
\midrule
\textbf{AdamW}  & 25.43 & Fail & Fail & Fail & Fail \\
\textbf{Muon}   & 22.53 & Fail & Fail & Fail & Fail \\
\textbf{Conda}  & 22.28 & 22.10 & 21.81 & 21.38 & 21.92 \\
\bottomrule
\end{tabular}
\end{table}

\begin{table}[t]
\centering
\small
\setlength{\tabcolsep}{2pt}
\renewcommand{\arraystretch}{1.2} 
\caption{Training time and validation perplexity across update frequency $T$ on LLaMA-60M pre-training.}
\label{tab:llama60m_updatefreq}
\resizebox{\textwidth}{!}{
\begin{tabular}{lccccccccc}
\toprule
Update Frequency $T$ & 2 & 10 & 50 & 100 & 500 & 1000 & 2000 & 5000 & 10000 \\
\midrule
Training time & 4h37min & 3h14min & 2h55min & 2h53min & 2h52min & 2h51min & 2h49min  & 2h49min & 2h48min \\
Validation Perplexity & 27.88 & 28.05 & 28.21 & 28.22 & 28.19 & 28.22 & 28.32 & 28.50 & 28.57 \\
\bottomrule
\end{tabular}
}
\end{table}

\begin{table}[!htbp]
\centering
\scriptsize
\setlength{\tabcolsep}{2pt}
\caption{Training time and validation perplexity across update frequency $T$ on LLaMA-130M pretraining.}
\label{tab:llama130m_updatefreq}
\resizebox{\linewidth}{!}{
\begin{tabular}{lcccccccccc}
\toprule
Update Frequency $T$ & 2 & 10 & 50 & 100 & 500 & 1000 & 2000 & 5000 & 10000 & 20000 \\
\midrule
Training time & 13h43min & 3h36min & 1h25min & 1h9min & 57min & 55min & 55min & 54min & 54min & 53min \\
Validation Perplexity & 21.08 & 21.19 & 21.36 & 21.36 & 21.36 & 21.35 & 21.43 & 21.55 & 21.64 & 21.68 \\
\bottomrule
\end{tabular}
}
\end{table}

\textbf{Detailed Training Time across different Subspace Update Frequencies}~~
We recorded wall-clock training time under different subspace update frequencies $T$ on LLaMA-60M (2$\times$ A6000 GPUs) and LLaMA-130M (8$\times$ A100 GPUs), as reported in Tables~\ref{tab:llama60m_updatefreq} and~\ref{tab:llama130m_updatefreq}. The SVD step introduces only negligible overhead for $T\!\ge\!100$. Although Fig.~4(d) shows that $T\!=\!500$ or $1{,}000$ yields strong training performance, the differences among $T\in\{500,1000,2000\}$ are relatively small. We therefore set $T\!=\!2{,}000$ in our main experiments to reflect practical, large-scale pre-training scenarios, where a larger $T$ reduces the frequency of preconditioner updates without sacrificing performance. This choice further highlights Conda’s robustness to the hyperparameter $T$, simplifying tuning and improving usability in real-world deployments.

So the decision to use T = 2000 was motivated by practical considerations rather than computational limitations. Conda maintains stable performance and minimal overhead across a wide range of $T$ values, making it an efficient and scalable choice for large-scale model training.

\vspace{2em}
\subsection{Proofs of Lemmas}\label{proofs of lemmas}
\label{lemmas}
\textbf{Muon Optimizer.}~~ Muon update the parameters as follows: 
\begin{align}\label{muon_appendix}
	\begin{cases}
		\Mmi{t} = \mu  \Mmi{t-1} + \Gmi{t}, \\
		\Omi{t} = \texttt{NewtonSchulz5}(\Mmi{t}),   \\
		\Wmi{t} = \Wmi{t-1} - \eta 	\Omi{t},
	\end{cases} 
\end{align}
\begin{lemma} 
For Muon in Eqn.~\eqref{muon_appendix}, it can be reformulated into the following equivalent one:
\begin{minipage}{0.48\linewidth}
\begin{align}\label{muon_appendix_left0}
	\begin{cases}
		\Mmi{t} = \mu  \Mmi{t-1} + \Gmi{t}, \\
		\Umi{t}, \Sigmai{t}, \Vmi{t}^\top= \text{\texttt{SVD}}(\Mmi{t}), \\
		\Mmi{t}'= \Umi{t}^{\top}\Mmi{t}, \\
		\Nmi{t} = \diag(\Sigmai{t}) \mathbf{1}_n^{\top}, \\
		\Wmi{t} = \Wmi{t-1} - \eta \Umi{t} (\Mmi{t}'/ \Nmi{t}).
	\end{cases} 
\end{align}
\end{minipage}
\begin{minipage}{0.48\linewidth}
\begin{align}
	\begin{cases}
		\Mmi{t} = \mu  \Mmi{t-1} + \Gmi{t}, \\
		\Umi{t}, \Sigmai{t}, \Vmi{t}^\top= \text{\texttt{SVD}}(\Mmi{t}), \\
		\Mmi{t}'= \Mmi{t}\Vmi{t}, \\
		\Nmi{t} = \mathbf{1}_m \diag(\Sigmai{t})^\top,\\
		\Wmi{t} = \Wmi{t-1} - \eta (\Mmi{t}'/ \Nmi{t})\Vmi{t}^\top.
	\end{cases} 
\end{align}
\end{minipage}
	where $\diag(\Sigmai{t})$ maps the singular values into a vector of dimension $\mathbb{R}^{\min(m,n)}$, and $\mathbf{1}_n\in\mathbb{R}^{n}$ denotes a vector whose  entries are  always ones.  
\end{lemma}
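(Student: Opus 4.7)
\subsection*{Proof Proposal for Lemma 1}

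The plan is to show that both reformulations produce exactly the update matrix $\Omi{t}=\Umi{t}\Vmi{t}^\top$, which, as the paper recalls in the discussion after Eqn.~\ref{muon}, is the theoretical output of the Newton--Schulz iteration applied to $\Mmi{t}$. Since the first step $\Mmi{t}=\mu\Mmi{t-1}+\Gmi{t}$ and the final descent step $\Wmi{t}=\Wmi{t-1}-\eta\,\Omi{t}$ are identical in all three formulations, the lemma reduces to showing that the intermediate construction recovers $\Umi{t}\Vmi{t}^\top$ from the SVD $\Mmi{t}=\Umi{t}\Sigmai{t}\Vmi{t}^\top$.

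For the left formulation in~\eqref{muon_appendix_left0}, the first step is to substitute the SVD into $\Mmi{t}'=\Umi{t}^\top\Mmi{t}$ and use $\Umi{t}^\top\Umi{t}=\mathbf{I}$ to obtain $\Mmi{t}'=\Sigmai{t}\Vmi{t}^\top$. Next, I would spell out the entries of the two matrices involved in the elementwise division: $(\Mmi{t}')_{ij}=\Sigmai{t,i,i}\,(\Vmi{t}^\top)_{ij}$ and $(\Nmi{t})_{ij}=\Sigmai{t,i,i}$ by the definition $\Nmi{t}=\diag(\Sigmai{t})\mathbf{1}_n^\top$. Dividing entrywise, the singular values cancel and $\Mmi{t}'/\Nmi{t}=\Vmi{t}^\top$, so $\Umi{t}(\Mmi{t}'/\Nmi{t})=\Umi{t}\Vmi{t}^\top=\Omi{t}$, matching the original Muon update. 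The right formulation is completely symmetric: $\Mmi{t}'=\Mmi{t}\Vmi{t}=\Umi{t}\Sigmai{t}$ using $\Vmi{t}^\top\Vmi{t}=\mathbf{I}$, then the entries $(\Mmi{t}')_{ij}=(\Umi{t})_{ij}\,\Sigmai{t,j,j}$ divided by $(\Nmi{t})_{ij}=\Sigmai{t,j,j}$ give $\Umi{t}$, and right-multiplication by $\Vmi{t}^\top$ recovers $\Umi{t}\Vmi{t}^\top$.

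The main technical care is in the rectangular case $m\neq n$: I need to be explicit about whether the reduced or full SVD is used, so that $\diag(\Sigmai{t})\in\mathbb{R}^{\min(m,n)}$ broadcasts correctly with $\mathbf{1}_n\in\mathbb{R}^n$ (resp.\ $\mathbf{1}_m$) to produce a matrix of the same shape as $\Mmi{t}'$, and to ensure that the singular values appearing in the denominator indeed match the indices of the numerator so that the cancellation is legitimate. Using the reduced SVD with $\Umi{t}\in\mathbb{R}^{m\times \min(m,n)}$ and $\Vmi{t}\in\mathbb{R}^{n\times\min(m,n)}$ makes this bookkeeping clean; the zero singular values (if any) do not appear because the reduced factorization is used. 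Assuming nonzero singular values, or equivalently interpreting $0/0$ consistently with the convention already implicit in Muon's normalization, the elementwise cancellation goes through without issue, completing the proof of both equivalences.
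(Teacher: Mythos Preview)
Your proposal is correct and follows essentially the same route as the paper: both verify that the reformulated update equals $\Umi{t}\Vmi{t}^\top$ by substituting the SVD and cancelling the singular values in the element-wise division. The paper organizes this as a two-case argument ($m<n$ versus $m\geq n$) with a chain of equalities starting from $\Umi{t}\Vmi{t}^\top$ and inserting $\Sigmai{t}^{-1}\Sigmai{t}$, while you work forward from $\Mmi{t}'$ using a single reduced-SVD setup; the substance is the same.
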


\begin{proof}
Let $\Mmi{t}\in\R^{m\times n}$. We consider two cases. 

Case 1: $m<n$. Denoting $\Umi{t}\Sigmai{t}\Vmi{t}^\top$ be the SVD of $\Mmi{t}$ with $\Umi{t}\in\R^{m\times m}, \Sigmai{t}\in\R^{m\times m}, \Vmi{t}\in\R^{n\times m}$, we have
\begin{eqnarray}
\begin{aligned}\label{muon_appendix_left1}
\Omi{t}=&\Umi{t}\Vmi{t}^\top\\
=&\Umi{t}\Sigmai{t}^{-1}\Umi{t}^\top\Umi{t}\Sigmai{t}\Vmi{t}^\top\\
=&\Umi{t}\Sigmai{t}^{-1}\Umi{t}^\top\Mmi{t}\\
=&\Umi{t}\frac{\Umi{t}^\top\Mmi{t}}{\diag(\Sigmai{t})\mathbf{1}_n^\top}
\end{aligned}
\end{eqnarray}
where $\mathbf{1}_n\in\R^n$ and $\diag(\Sigmai{t})\in\R^m$.

Case 2: $m\geq n$. Denoting $\Umi{t}\Sigmai{t}\Vmi{t}^\top$ be the SVD of $\Mmi{t}$ with $\Umi{t}\in\R^{m\times n}, \Sigmai{t}\in\R^{n\times n}, \Vmi{t}\in\R^{n\times n}$, we have
\begin{eqnarray}
\begin{aligned}\label{muon_appendix_right1}
\Omi{t}=&\Umi{t}\Vmi{t}^\top\\
=&\Umi{t}\Sigmai{t}\Vmi{t}^\top\Vmi{t}\Sigmai{t}^{-1}\Vmi{t}^\top\\
=&\Mmi{t}\Vmi{t}\Sigmai{t}^{-1}\Vmi{t}^\top\\
=&\frac{\Mmi{t}\Vmi{t}}{\mathbf{1}_m\diag(\Sigmai{t})^\top}\Vmi{t}^\top
\end{aligned}
\end{eqnarray}
where $\mathbf{1}_m\in\R^m$ and $\diag(\Sigmai{t})\in\R^n$.

The expressions (\ref{muon_appendix_left1}) and (\ref{muon_appendix_right1}) must be computed in a specific order to preserve equivalence with $\Omi{t}=\Umi{t}\Vmi{t}^\top$. In both cases, the matrix product in the numerator must be computed first, followed by element-wise division with the denominator matrix (e.g., $\diag(\Sigmai{t})\mathbf{1}_n^\top$ or $\mathbf{1}_m\diag(\Sigmai{t})^\top$), and finally followed by the remaining matrix multiplication. Element-wise division and matrix multiplication are not interchangeable, computing them out of order would yield incorrect results.

\end{proof}

\begin{lemma}
	For Muon in Eqn.~\eqref{muon_appendix_left0}, its parameter update can be rewritten as
	\begin{align}\label{muon_appendix_left2}
		\begin{split}
			\Omi{t} \!= \!\Umi{t} (\Mmi{t}'/ \Nmi{t}) \!=\! \left[\sum_{i=1}^{m} \!\frac{1}{\Sigmai{t,i,i}} \Umi{t}^{(i)} \Mmi{t,:1},  \sum_{i=1}^{m}\! \frac{1}{\Sigmai{t,i,i}} \Umi{t}^{(i)}\Mmi{t,:2}, \ldots, \sum_{i=1}^{m} \!\frac{1}{\Sigmai{t,i,i}} \Umi{t}^{(i)} \Mmi{t,:n}\right],
		\end{split} 
	\end{align}
	where   $ \Sigmai{t,i,i}$ denotes the $i$-th singular value in $ \Sigmai{t}$, $\Umi{t}^{(i)}=\Umi{t,:i}\Umi{t,:i}^{\top}$ in which  $\Umi{t,:i}$ is the $i$-th column of $\Umi{t}$. In contrast, the update in Conda is equivalent to  
		\begin{align}
		\begin{split}
			\Omi{t} \!= \! \Umi{t}  \frac{\Mmbi{t}}{\sqrt{\Nmi{t}}} \!=\! \left[\sum_{i=1}^{m} \!\frac{1}{\sqrt{\Nmi{t,i,1}}} \Umi{t}^{(i)} \Mmi{t,:1},  \sum_{i=1}^{m}\! \frac{1}{\sqrt{\Nmi{t,i,2}}} \Umi{t}^{(i)}  \Mmi{t,:2}, \ldots, \sum_{i=1}^{m} \!\frac{1}{\sqrt{\Nmi{t,i,n}}} \Umi{t}^{(i)}\Mmi{t,:n}\right],
		\end{split} 
	\end{align}
	where   $ \Nmi{t,i,j}$ denotes the $(i,j)$-th value in matrix  $ \Nmi{t}$.
\end{lemma}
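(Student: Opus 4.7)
The plan is to establish both identities column-by-column, since the claimed expressions are explicit column-wise decompositions of $\Omi{t}$. I would first fix an arbitrary column index $j \in \{1, \ldots, n\}$ and read off $\Omi{t,:j}$ as the $j$-th column of the product $\Umi{t} (\cdot)$, then expand in terms of the columns of $\Umi{t}$.

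For Muon, I would begin by unpacking the structure of $\Nmi{t} = \diag(\Sigmai{t}) \mathbf{1}^{\top}$: its $(i,j)$-th entry equals $\Sigmai{t,i,i}$ independent of $j$. Since $\Mmi{t}' = \Umi{t}^{\top} \Mmi{t}$ has $(i,j)$-th entry $\Umi{t,:i}^{\top} \Mmi{t,:j}$, the element-wise ratio satisfies $(\Mmi{t}'/\Nmi{t})_{i,j} = \Umi{t,:i}^{\top} \Mmi{t,:j} / \Sigmai{t,i,i}$. Multiplying by $\Umi{t}$ on the left and reading off the $j$-th column gives
$$\Omi{t,:j} \;=\; \sum_{i=1}^{m} \Umi{t,:i} \cdot \frac{\Umi{t,:i}^{\top} \Mmi{t,:j}}{\Sigmai{t,i,i}} \;=\; \sum_{i=1}^{m} \frac{1}{\Sigmai{t,i,i}} \Umi{t}^{(i)} \Mmi{t,:j},$$
after grouping $\Umi{t,:i} \Umi{t,:i}^{\top} = \Umi{t}^{(i)}$. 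Stacking these columns into a matrix yields the claimed identity~\eqref{muon_appendix_left2}.

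For Conda the argument is structurally identical, except that the second-moment entry $\Nmi{t,i,j}$ now genuinely depends on $j$, since it is an exponential moving average of $(\Umi{t}^{\top} \Gmi{t})^2$ rather than a row-constant matrix. Dividing $\Mmbi{t} = \Umi{t}^{\top}\Mmi{t}$ entrywise by $\sqrt{\Nmi{t}}$ and then left-multiplying by $\Umi{t}$ yields the same rank-one-projector decomposition, with $1/\sqrt{\Nmi{t,i,j}}$ replacing $1/\Sigmai{t,i,i}$; the key difference is that this factor can no longer be pulled out uniformly across columns. The contrast between the $j$-independent scaling for Muon and the $j$-dependent scaling for Conda is precisely the structural point the lemma is designed to surface.

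I do not expect serious technical obstacles; the proof is essentially careful bookkeeping. The one point that needs attention is that the expressions mix matrix multiplication with element-wise division, so the order of operations must be respected: one computes the numerator matrix $\Mmi{t}'$ (or $\Mmbi{t}$), then performs element-wise division, and only then applies the outer $\Umi{t}$ multiplication. Once this order is fixed, the identity $\Omi{t,:j} = \Umi{t} (\Mmi{t}'/\Nmi{t})_{:,j}$ follows from the definition of matrix-vector multiplication, and the rest is just writing the inner product $\Umi{t,:i}^{\top} \Mmi{t,:j}$ as a scalar and absorbing it into $\Umi{t,:i}\Umi{t,:i}^{\top}$.
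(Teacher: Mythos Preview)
Your proposal is correct and follows essentially the same approach as the paper: both arguments work column-by-column, expand $\Umi{t}$ times the $j$-th column of the element-wise ratio as $\sum_{i=1}^m \Umi{t,:i}\cdot(\Umi{t,:i}^\top\Mmi{t,:j}/\Nmi{t,i,j})$, and then regroup $\Umi{t,:i}\Umi{t,:i}^\top$ into the rank-one projector $\Umi{t}^{(i)}$. Your explicit remark about the order of operations (numerator, then element-wise division, then outer multiplication by $\Umi{t}$) is a useful addition that the paper leaves implicit in this lemma.
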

\begin{proof}
For Muon:
	\begin{align}
		\begin{split}
			\Omi{t} \!= \!\Umi{t} (\Mmi{t}'/ \Nmi{t}) \!
            = \!\left[\Umi{t}\frac{\Umi{t}^\top\Mmi{t,:1}}{\Nmi{t,:1}}, \!\Umi{t}\frac{\Umi{t}^\top\Mmi{t,:2}}{\Nmi{t,:2}}, \ldots, \! \Umi{t}\frac{\Umi{t}^\top\Mmi{t,:n}}{\Nmi{t,:n}}\right]\\
		\end{split} 
	\end{align}
where $\Nmi{t,:j}$ means the $j$-th column of $\Nmi{t}$. For the $j$-th column of $\Omi{t}$, we have
\begin{align}
\Umi{t} \! \frac{\Umi{t}^\top \! \Mmi{t,:j}}{\Nmi{t,:j}}
= \!\left[ \Umi{t,:1}, \Umi{t,:2}, \cdots, \Umi{t,:m} \right]
\! \left[
\begin{array}{c}
\frac{\Umi{t,:1}^\top \! \Mmi{t,:j}}{\Nmi{t,1,j}} \\
\frac{\Umi{t,:2}^\top \! \Mmi{t,:j}}{\Nmi{t,2,j}} \\
\vdots \\
\frac{\Umi{t,:m}^\top \! \Mmi{t,:j}}{\Nmi{t,m,j}} \\
\end{array}
\right]
= \! \sum_{i=1}^m \! \Umi{t,:i} \! \frac{\Umi{t,:i}^\top \! \Mmi{t,:j}}{\Nmi{t,i,j}}
= \! \sum_{i=1}^m \! \frac{1}{\Sigmai{t,i,i}} \Umi{t}^{(i)} \! \Mmi{t,:j}\notag
\end{align}

For Conda:
\begin{align}
    \begin{split}
		\Omi{t} \!= \! \Umi{t}  \frac{\Mmbi{t}}{\sqrt{\Nmi{t}}} \!=
        \!\left[\Umi{t}\frac{\Umi{t}^\top\Mmi{t,:1}}{\sqrt{\Nmi{t,:1}}}, \!\Umi{t}\frac{\Umi{t}^\top\Mmi{t,:2}}{\sqrt{\Nmi{t,:2}}}, \ldots, \! \Umi{t}\frac{\Umi{t}^\top\Mmi{t,:n}}{\sqrt{\Nmi{t,:n}}}\right]
    \end{split} 
\end{align}
where $\sqrt{\Nmi{t,:j}}$ means the $j$-th column of $\sqrt{\Nmi{t}}$. For the $j$-th column of $\Omi{t}$, we have
\begin{align}
\Umi{t} \! \frac{\Umi{t}^\top \! \Mmi{t,:j}}{\sqrt{\Nmi{t,:j}}}
\!\!= \!\!\left[ \Umi{t,:1}, \Umi{t,:2}, \cdots, \Umi{t,:m} \right]
\!\!\left[
\begin{array}{c}
\frac{\Umi{t,:1}^\top \! \Mmi{t,:j}}{\sqrt{\Nmi{t,1,j}}} \\
\frac{\Umi{t,:2}^\top \! \Mmi{t,:j}}{\sqrt{\Nmi{t,2,j}}} \\
\vdots \\
\frac{\Umi{t,:m}^\top \! \Mmi{t,:j}}{\sqrt{\Nmi{t,m,j}}} \\
\end{array}
\right]\!\!\!
= \!\! \sum_{i=1}^m \! \Umi{t,:i} \! \frac{\Umi{t,:i}^\top \! \Mmi{t,:j}}{\sqrt{\Nmi{t,i,j}}}
\!\!= \!\!\! \sum_{i=1}^m \! \frac{1}{\sqrt{\Nmi{t,i,j}}} \Umi{t}^{(i)} \! \Mmi{t,:j}\notag
\end{align}
\end{proof}

\subsection{Pseudocode for Conda}\label{pseudocode}
\begin{algorithm}[htb]
	\begin{algorithmic}[1] 
		\STATE \textbf{Input:} Weight matrix $\Wmi{}\in \mathbb{R}^{m \times n}$, with \( m \leq n \), 
		learning rate $\eta$, RMS scale factor $\alpha$, 
		decay rates $\beta_1, \beta_2$, subspace update frequency $T$.
		\STATE Initialize $t \gets 0$, $\Mmi{0} \gets 0$, $\Vmi{0} \gets 0$
		\REPEAT
		\STATE compute minibatch gradient $\Gmi{t} $ 
		\STATE $\Mmi{t} \gets \beta_1 \Mmi{t-1} + (1 - \beta_1)\Gmi{t}$
		\IF{$t \bmod T = 0$}
		\STATE $\Umi{t}, \Sigmai{t}, \Vmi{t}^\top \gets \text{SVD}(\Mmi{t}),$ \quad $\Umbi{t} \gets \Umi{t}$
		\ELSE
		\STATE $\Umbi{t} \gets \Umbi{t-1}$
		\ENDIF
		\STATE $\Mmbi{t} \gets \Umbi{t}^\top \Mmi{t}$ 
		\STATE $\Nmi{t} \gets \beta_2 \Vmi{t-1} + (1 - \beta_2) (\Umbi{t}^\top \Gmi{t})^2$
		\STATE $\Mmbi{t} \gets \Mmbi{t} / (1 - \beta_1^\top),$ \quad $\Nmi{t} \gets \Nmi{t} / (1 - \beta_2^\top)$
		\STATE $\Wmi{t} \gets \Wmi{t-1} + \eta   \Umbi{t}\Mmbi{t} / (\sqrt{\Vmi{t}} + \epsilon)$ 
		\STATE  $t \gets t + 1$
		\UNTIL{convergence criteria is met}
		\RETURN $\Wmi{t}$
	\end{algorithmic}
	\caption{Column-Normalized Adam (Conda)}
	\label{alg:Conda(Ours)}
\end{algorithm}

In our experiments, we observe that the choice of projection matrix affects performance depending on the  dimensions of the weight matrix $\Wmi{} \in \mathbb{R}^{m \times n}$. Specifically, when $m \leq n$, using a left projection matrix $\Umi{t}^\top$ yields better results. Conversely, when $m > n$, a right projection matrix $\Vmi{t}$ is preferred. Following the design in~\citet{liu2025muon}, we also introduce a scale factor, which can be interpreted as a proportional adjustment between the learning rates for one-dimensional and two-dimensional parameters. In practice, the scale factor is selected via hyperparameter tuning based on validation performance.

\lstdefinestyle{mystyle}{
  basicstyle=\ttfamily\small,  
  breaklines=true,
  showstringspaces=false,
}
\lstset{style=mystyle}

\begin{lstlisting}[language=Python,breaklines=true,showstringspaces=false,caption={Conda code skeleton using Pytorch.}]
import torch
import math
from torch.optim import Optimizer

class Conda(Optimizer):
    def __init__(self, params, lr=1e-3, betas=(0.9, 0.99), eps=1e-8,
                 weight_decay=0.0, correct_bias=True, update_proj_gap=2000, scale=1.0):
        defaults = dict(lr=lr, betas=betas, eps=eps,
                        weight_decay=weight_decay, correct_bias=correct_bias,
                        update_proj_gap=update_proj_gap, scale=scale)
        super().__init__(params, defaults)

    @torch.no_grad()
    def step(self, closure=None):
        for group in self.param_groups:
            for p in group["params"]:
                if p.grad is None or p.grad.is_sparse or p.grad.ndim != 2:
                    continue

                grad = p.grad.data
                state = self.state[p]

                if len(state) == 0:
                    state["step"] = 0
                    state["exp_avg"] = torch.zeros_like(p.data)
                    state["exp_avg_sq_proj"] = None
                    state["proj_basis"] = None
                    state["proj_type"] = None

                exp_avg = state["exp_avg"]
                beta1, beta2 = group["betas"]
                state["step"] += 1
                step = state["step"]

                exp_avg.mul_(beta1).add_(grad, alpha=1 - beta1)

                if step % group["update_proj_gap"] == 0 or state["proj_basis"] is None:
                    U, _, Vh = torch.linalg.svd(exp_avg, full_matrices=False)
                    if grad.shape[0] <= grad.shape[1]:
                        state["proj_basis"] = U
                        state["proj_type"] = "left"
                    else:
                        state["proj_basis"] = Vh
                        state["proj_type"] = "right"

                P = state["proj_basis"]
                if state["proj_type"] == "left":
                    G_proj = P.T @ grad
                    M_proj = P.T @ exp_avg
                else:
                    G_proj = grad @ P.T
                    M_proj = exp_avg @ P.T

                if state["exp_avg_sq_proj"] is None:
                    state["exp_avg_sq_proj"] = torch.zeros_like(G_proj)
                exp_avg_sq_proj = state["exp_avg_sq_proj"]
                exp_avg_sq_proj.mul_(beta2).addcmul_(G_proj, G_proj, value=1 - beta2)

                bc1 = 1 - beta1 ** step if group["correct_bias"] else 1.0
                bc2 = 1 - beta2 ** step if group["correct_bias"] else 1.0
                M_proj = M_proj / bc1
                V_hat = exp_avg_sq_proj / bc2
                denom = V_hat.sqrt().add_(group["eps"])

                if state["proj_type"] == "left":
                    update = P @ (M_proj / denom)
                else:
                    update = (M_proj / denom) @ P

                update.mul_(group["scale"])
                p.data.add_(-group["lr"] * update)

                if group["weight_decay"] > 0.0:
                    p.data.add_(p.data, alpha=-group["lr"] * group["weight_decay"])
\end{lstlisting}
\vspace{3em}

\begin{table*}[!htbp]
    \caption{Architecture hyperparameters of LLaMA models for evaluation. Data amount are specified in tokens.}
    \label{tab:llama_architectures}
    \small
    \begin{center}
    \begin{tabular}{cccccccc}
    \toprule
    Params & Hidden & Intermediate & Heads & Layers & Steps & Data amount \\
    \midrule
    60M & 512 & 1376 & 8 & 8  & 10K & $1.3 \mathrm{~B}$ \\
    130M & 768 & 2048 & 12 & 12  & 20K & $2.6 \mathrm{~B}$ \\
    350M & 1024 & 2736 & 16 & 24  & 60K & $7.8 \mathrm{~B}$ \\
    $1 \mathrm{~B}$ & 2048 & 5461 & 24 & 32 & 100K & $13.1 \mathrm{~B}$ \\
    $7 \mathrm{~B}$ & 4096 & 11008 & 32 & 32 & 150K & $19.7 \mathrm{~B}$ \\
    \bottomrule
    \end{tabular}
    \end{center}
\end{table*}
\subsection{Detailed Pre-Training Setting}\label{detailed_pre-training_setting}
In this section, we provide a detailed description of the pre-training setup, including the architectures of LLaMA and GPT-2, as well as the hyperparameters used.

\textbf{LLaMA series}~~
Following~\citet{zhao2024galore}, we adopt most of the hyperparameters for LLaMA models across different model sizes as shown in Table~\ref{tab:llama_architectures}.
We use a max sequence length of 256 for all models, with a batch size of 131K tokens.
For all experiments, we adopt learning rate warmup for the first 10\% of the training steps and use cosine annealing for the learning rate schedule, decaying to 10\% of the initial learning rate. In addition, for models smaller than 1B parameters, we set the weight decay to 0 and apply global gradient clipping with a threshold of 0. For models with 1B parameters or larger, we set the weight decay to 0.1 and use global gradient clipping with a threshold of 1.0.

For Conda, we employ a unified set of hyperparameters across all model sizes ranging from 60M to 1B parameters. We use a learning rate of 0.01, betas of (0.9, 0.99), scale factor of 0.25, and a subspace update frequency of $T = 2,000$.

For all baselines and each model size (ranging from 60M to 350M parameters), we tune the learning rate and select the optimal value based on the lowest validation perplexity. For AdamW, since larger learning rates (greater than 1e-3) tend to cause spikes in the training loss~\citep{zhao2024galore}, we search over $\{1e-3,7e-4, 5e-4, 3e-4, 1e-4\}$. For Muon, following~\citet{liu2025muon}, which matches the update RMS to that of AdamW, we directly use the best learning rate identified for AdamW. For Adafactor and SOAP, the optimal learning rate is selected from $\{5e-3, 4e-3, 3e-3, 2e-3, 1e-3, 5e-4, 1e-4\}$. Due to computational resource constraints, we are unable to perform exhaustive learning rate tuning for all methods on the 1B model. Therefore, for the 1B model, we adopt the best learning rate found for the 350M model for each method, provided that training remains stable without significant fluctuations. The optimal learning rates for all methods and model sizes are summarized in Table~\ref{tab:optimal_lr_for_llama}. In addition, we provide a sensitivity analysis of Conda, Muon, and Adam to different learning rates; detailed results can be found in Appendix~\ref{tab:llama130m_lr_opt}.

We also conduct a grid search over other hyperparameters for all optimizers. As shown in Table~\ref{tab:hyperparameters_for_llama}, for AdamW, we follow the settings in ~\citet{zhao2024galore, zhu2024apollo,chen2024fira,huang2025spam}, using $\beta_1 = 0.9$ and $\beta_2 = 0.999$. For Muon,we set $\mu = 0.95$, since one-dimensional parameters require Adam for training, we accordingly adopt $\beta_1 = 0.9$ and $\beta_2 = 0.95$ for Adam in this case. For Adafactor, we use $\beta_1 = 0.9$. For SOAP, we use $\beta_1 = 0.9$ and $\beta_2 = 0.99$ for models smaller than 1B parameters, and $\beta_1 = 0.9$ and $\beta_2 = 0.95$ for models with 1B parameters or larger. All other optimizer hyperparameters not otherwise specified are set to their default values.

\begin{figure}[t]
  \centering
  \begin{minipage}[t]{0.48\textwidth}
    \centering
    \small
    \captionof{table}{Selected learning rates for each optimizer across different LLaMA model sizes.}
    \label{tab:optimal_lr_for_llama}
    \begin{tabular}{l|cccc}
    \toprule
    Learning rate      & 60M   & 130M  & 350M  & 1B    \\
    \midrule
    AdamW     & \multicolumn{4}{c}{1e-3}     \\
    Muon      & \multicolumn{4}{c}{1e-3}     \\
    Adafactor & 3e-3 & 3e-3 & 1e-3 & 1e-3 \\
    SOAP      & 2e-3 & 2e-3 & 1e-3 & 1e-3 \\
    Conda (Ours)      & \multicolumn{4}{c}{1e-2}      \\
    \bottomrule
    \end{tabular}
  \end{minipage}%
  \hfill
  \begin{minipage}[t]{0.48\textwidth}
    \centering
    \small
    \setlength{\tabcolsep}{3pt}
    \captionof{table}{Selected values of $(\beta_1, \beta_2)$ for each optimizer across different LLaMA model sizes.}
    \label{tab:hyperparameters_for_llama}
    \begin{tabular}{l|cccc}
    \toprule
    $(\beta_1$, $\beta_2)$   & 60M   & 130M  & 350M  & 1B    \\
    \midrule
    AdamW     & \multicolumn{4}{c}{(0.9, 0.999)}    \\
    Muon      & \multicolumn{4}{c}{---}     \\
    Adafactor & \multicolumn{4}{c}{$\beta_1=0.9$} \\
    SOAP      & \multicolumn{3}{c}{(0.9, 0.99)} & (0.9, 0.95) \\
    Conda (Ours)      & \multicolumn{4}{c}{(0.9, 0.99)}      \\
    \bottomrule
    \end{tabular}
  \end{minipage}
\vspace{-5mm}
\end{figure}

\textbf{GPT-2 series}~~
Following the experimental setup in Sophia~\citep{liu2023sophia}, we pre-train GPT-2 Small (125M parameters) and GPT-2 Medium (355M parameters)~\citep{radford2019language} on the OpenWebText dataset~\citep{Gokaslan2019OpenWeb} using the nanoGPT implementation~\citep{karpathy2022nanogpt}. 
We use a batch size of 480,  a sequence length of 1024, a cosine learning rate decay schedule with 2000 warm-up iterations, global gradient clipping with a threshold of 1.0, and train all models for 100,000 steps. The detailed architecture hyperparameters for GPT-2 are provided in Table~\ref{tab:gpt2_architectures}.

For the AdamW baseline, we adopt the hyperparameter settings from Sophia~\citep{liu2023sophia}, who performed extensive hyperparameter searches that have become the de facto standard for training GPT-2. For Muon, as mentioned above, we can directly use the same hyperparameter settings as AdamW. However, due to computational resource constraints, we are unable to perform extensive hyperparameter searches for all other methods. Therefore, for fairness, we scale the learning rates of other methods (including Conda) according to their relative ratios to AdamW as used on LLaMA-1B. Specifically, AdamW, Adafactor, and SOAP all use a learning rate of 1e-3 on LLaMA-1B, while Conda uses 1e-2. Accordingly, on GPT-2 125M, Adafactor and SOAP are assigned the same learning rate as AdamW, i.e., 6e-4, while Conda uses a learning rate ten times that of AdamW, i.e., 6e-3. The same scaling strategy is applied for GPT-2 355M. 
All other hyperparameters in Conda remain consistent with those used in the LLaMA experiments, including an scale factor of 0.25 and a subspace update frequency $T$ = 2000. Detailed hyperparameter settings are summarized in Table~\ref{tab:hyperparameters_for_GPT-2}.

\begin{table*}[h]
    \caption{Architecture hyperparameters of GPT-2 models for evaluation. Data amount are specified in tokens.}
    \label{tab:gpt2_architectures}
    \small
    \begin{center}
    \begin{tabular}{cccccccc}
    \toprule
    Params & Heads & Layers & $d_{\text{emb}}$ &Steps & Data amount \\
    \midrule
    125M & 12 & 12 &768 & 100K & 49.2B \\
    355M & 16 & 24 &1024 & 100K & 49.2B \\
    \bottomrule
    \end{tabular}
    \end{center}
\end{table*}

\begin{table*}[h]
\centering
\small
\caption{Experimental hyperparameters for GPT-2 models.}
\label{tab:hyperparameters_for_GPT-2}
\begin{tabular}{l|c|cccc|c}
\toprule
\textbf{Hyperparameter} & \textbf{GPT-2} & \textbf{AdamW} & \textbf{Muon} & \textbf{Adafactor}& \textbf{SOAP} & \textbf{Conda (Ours)} \\
\midrule
\multirow{2}{*}{Max learning rate}
    & small (125M)  & \multicolumn{4}{c|}{6e-4} &6e-3     \\
    & medium (355M) & \multicolumn{4}{c|}{3e-4} &3e-3  \\
\midrule
\multirow{2}{*}{Min learning rate}
    & small (125M)  & \multicolumn{4}{c|}{3e-5} & 3e-5     \\
    & medium (355M) &\multicolumn{4}{c|}{6e-5} & 6e-5     \\
\midrule
Weight decay & small/medium
    &\multicolumn{4}{c|}{1e-1} & 1e-2 \\
\midrule
$(\beta_1, \beta_2)$ & small/medium
    &(0.9, 0.95) & --- &\multicolumn{2}{c|}{(0.9, 0.95)} & (0.9, 0.99) \\
\bottomrule
\end{tabular}
\end{table*}

\subsection{Detailed Fine-tuning Setting}\label{detailed_fine-tuning_setting}
Following~\citet{liu2024dora}, we evaluate the effectiveness of Conda in supervised fine-tuning. Since LoRA~\citep{hu2022lora} is one of the most widely adopted parameter-efficient fine-tuning methods, we adopt it as the fine-tuning method and compare Conda with the standard AdamW baseline under identical LoRA settings. Specifically, we fine-tune LLaMA-7B, LLaMA3.2-1B, and LLaMA3-8B on the Commonsense170K dataset~\citep{hu2023llm}, and assess their generalization on commonsense reasoning benchmarks~\citep{clark2019boolq,bisk2020piqa,sap2019socialiqa,sakaguchi2021winogrande,clark2018think,mihaylov2018can,zellers2019hellaswag}. Detailed hyperparameters are provided in Table~\ref{tab:llama_commonsense_hyperparameters}.

\begin{table*}[t]
\centering
\caption{Hyperparameters for fine-tuning different LLaMA models using Conda on commonsense reasoning tasks.}
\vskip 0.1in
\small
\begin{tabular}{cccc}
\toprule
\textbf{Model} & LLaMA-7B & LLaMA3.2-1B & LLaMA3-8B \\
\midrule
Rank $r$         & \multicolumn{3}{c}{32}   \\
$\alpha$         & \multicolumn{3}{c}{64}  \\
Scale            & \multicolumn{3}{c}{1.0}    \\
Update Frequency $T$ & \multicolumn{3}{c}{200}    \\
Dropout          & \multicolumn{3}{c}{0.05} \\
LR               & 1e-4 & 3e-4 & 1e-4 \\
LR Scheduler     & \multicolumn{3}{c}{Linear} \\
Batch size       & \multicolumn{3}{c}{16} \\
Warmup Steps     & \multicolumn{3}{c}{100} \\
Epochs           & \multicolumn{3}{c}{3} \\
Where            & \multicolumn{3}{c}{Q, K, V, Up, Down} \\
\bottomrule
\end{tabular}
\label{tab:llama_commonsense_hyperparameters}
\end{table*}

\vspace{2em}
\subsection{Conceptual and Algorithmic Differences from SOAP}\label{difference_from_soap}
\textbf{On the novelty of Conda’s design motivation.}~~
The design of Conda is motivated by a fundamentally different perspective: to integrate the coordinate-wise adaptivity of Adam into the spectrally normalized framework of Muon—a direction that none of the aforementioned methods explicitly pursue. To this end, our work introduces a novel reformulation of Muon, which reveals conceptual connections to widely-used adaptive optimizers such as Adam and Adafactor. This reformulation not only provides a deeper theoretical understanding of Muon’s behavior, but also serves as a foundation for principled algorithmic improvements. Building upon this reformulation, we further investigate how to effectively incorporate Adam-style adaptivity. Specifically, we experimented with two approaches for estimating the second-moment statistics:

(i) using the original (vanilla) second-moment estimator from Adam.

(ii) computing the second-moment estimate from projected gradients in the same subspace as the first moment.

We found that the native second-moment estimator, which is not aligned with the subspace of the first moment, leads to instability and even divergence in the training of larger models. In contrast, using the projected gradients to compute the second-moment estimate ensures consistency between the curvature estimation and the update direction, thereby stabilizing training. As a result, Conda achieves a principled integration of spectral conditioning and coordinate-wise adaptivity. Importantly, we view Conda not merely as an improved version of Muon, but also as a vehicle to raise a broader and promising question:

\emph{How can we design new second-moment estimators, grounded in the novel reformulation of Muon, that simultaneously achieve strong spectral conditioning and retain the full coordinate-wise adaptivity of Adam?}

We believe this direction opens up exciting opportunities for future research. In contrast, SOAP reveals that Shampoo can be interpreted as Adafactor operating in the eigenbasis of its preconditioner, thereby establishing a conceptual connection between the two optimizers. It leverages this insight to propose a simplified variant that performs Adam-style updates within the same eigenbasis.

\textbf{Algorithmic formulation and empirical performance.}~~
SOAP construct the projection subspace $Q_L$ by first computing the second-moment matrix $L = \beta_3 L + (1 - \beta_3) G G^\top$ and $R = \beta_3 R + (1 - \beta_3) G^\top G $, followed by eigenvalue decomposition. In contrast, Conda eliminates the need to compute any second-moment statistics and directly obtains the projection subspace $U$ through singular value decomposition (SVD) of the first-moment estimate. On the one hand, it is known that gradient matrices in deep networks often exhibit significant outliers~\citep{xi2023training,anil2019memory}. Computing $G G^\top$ tends to exacerbate these outliers, and since eigenvectors in eigenvalue decomposition are sensitive to perturbations~\citep{stewart1998perturbation}, the resulting subspace estimation becomes less reliable, potentially impairing convergence speed. By contrast, Conda defines the orthogonal subspace on the momentum, which effectively suppresses the noise introduced by stochastic gradient estimates and yields a more accurate and stable subspace estimate.

On the other hand, in comparison to Conda, maintaining the second-moment matrix $L$ and $R$ introduces not only an additional hyperparameter $\beta_3$, but also extra memory overhead. Furthermore, we note that SOAP employs a much higher projection update frequency (i.e., T = 10) in the experiments, whereas Conda uses T = 2000, resulting in significantly lower runtime overhead for Conda. This efficiency is enabled by using momentum to define the subspace, which suppresses gradient noise, stabilizes update directions, and promotes more consistent traversal of the loss landscape—thereby allowing infrequent updates without sacrificing performance.

\end{document}